\newcommand{\diffs}[3]{\frac{\partial^2 #1}{
\ifx#2#3 
\partial #2^2
\else
\partial #2 \partial #3
\fi
}}
\newtheorem*{problem*}{Problem}
\newtheorem{prop}{Proposition}
\newtheorem{result}{Result}
\theoremstyle{definition}
\theoremstyle{remark}
\newtheorem{theorem}{Theorem}[section]
\newtheorem{remark}[theorem]{Remark}
\newcommand\red[1]{{\textcolor{red}{#1}}}
\newcommand\blue[1]{{\textcolor{blue}{#1}}}
\newcommand{\rev}[1]{} %for hiding text in the camera ready 
\title{Distributed Estimation of State and Parameters in\\ Multi-Agent Cooperative Load Manipulation}
\author{Antonio Franchi, \textit{Senior Member}, \textit{IEEE}, \and Antonio Petitti, \and Alessandro Rizzo, \textit{Senior Member}, \textit{IEEE}
 % <-this % stops a spacex1
\thanks{A.~Franchi is with CNRS, LAAS, 7 avenue du colonel Roche, F-31400 Toulouse, France and Univ de Toulouse, LAAS, F-31400 Toulouse, France {\tt \scriptsize \href{mailto:afranchi@laas.fr}{afranchi@laas.fr}}}
\thanks{A.~Petitti is with the Institute of Intelligent Industrial Technologies and Systems for Advanced Manufacturing, National Research Council (STIIMA-CNR), 70126 Bari, Italy, {\tt \scriptsize\href{mailto:antonio.petitti@stiima.cnr.it}{antonio.petitti@stiima.cnr.it}}}
\thanks{A.~Rizzo is with the Dipartimento di Elettronica e Telecomunicazioni, Politecnico di Torino, Corso Duca degli Abruzzi 24, 10129 Torino, Italy {\tt \scriptsize \href{mailto:alessandro.rizzo@polito.it}{alessandro.rizzo@polito.it}}}
   \thanks{This research was partially supported by the ANR, Project ANR-17-CE33-0007 MuRoPhen, by the Siebel Energy Institute, and by Compagnia di San Paolo.}
%\thanks{\red{Lorem Ipsum is simply dummy text of the printing and typesetting industry. Lorem Ipsum has been the industry's standard dummy text ever since the 1500s, when an unknown printer took a galley of type and scrambled it to make a type specimen book. It has survived not only five centuries, but also the leap into electronic typesetting, remaining essentially unchanged. It was popularised in the 1960s with the release of Letraset sheets containing Lorem Ipsum passages, and more recently with desktop publishing software like Aldus PageMaker including versions of Lorem Ipsum.}}
}
\begin{document}

\maketitle

\begin{abstract}
We present two distributed methods for the estimation of the kinematic parameters, the dynamic parameters, and the kinematic state of an unknown planar body manipulated by a decentralized multi-agent system\rev{ground (planar) robots}. The proposed approaches rely on the rigid body kinematics and dynamics, on nonlinear observation theory, and on consensus algorithms. The only three requirements are that each agent can exert a 2D wrench on the load, it can measure the velocity of its contact point, and that the communication graph is connected. Both theoretical nonlinear observability analysis and convergence proofs are provided.
The first method assumes constant parameters while the second one can deal with time-varying parameters and can be applied in parallel to any task-oriented control law. For the cases in which a control law is not provided, we propose a distributed and safe control strategy satisfying the observability condition. The effectiveness and robustness  of the estimation strategy is showcased by means of realistic MonteCarlo simulations. 
\end{abstract}

\ifdefined\INTROLIPSUM

\section{Introduction}\label{sec:intro}

\lipsum[1-7]

\else

\section{Introduction}\label{sec:intro}

In this paper, we propose what we believe is the first fully-distributed method for the estimation of all the quantities and parameters needed by a \rev{ground (planar)}planar robotic multi-agent system to collectively manipulate an unknown load. In particular, the proposed algorithm provides the estimation of the kinematic parameters (equivalent to the grasp matrix), the dynamic parameters (relative position of the center of mass, mass, and rotational inertia) and the kinematic state of the load (velocity of the center of mass and rotational rate). 

Most of the works on cooperative manipulation in the literature assume the \textit{a priori} knowledge of the inertial parameters of the load, even though this assumption does not always hold in real-world scenarios~\cite{1989-KimZhe,1992-SchCan,1991-WalFreMar,2002-SzePluBid,2013-SieDerHir}. Collective manipulation tasks would benefit from the implementation of on-line estimation strategies of inertial parameters of unknown loads for at least two reasons: first, existing control strategies, such as force control and pose estimation, could be effectively applied with satisfactory performance and a reduced control effort. Second, time-varying loads could be effectively manipulated, toward the implementation of adaptive or event-driven control strategies in uncertain environments.
 Furthermore, similarly to other applications in multi-agent systems,
a distributed and decentralized implementation of such estimation strategies would provide robustness and scalability.
Research on the estimation of inertial parameters is at its early stage, and main limitations of the existing approaches reside in their centralization and in the use of absolute position and acceleration measurements, which are hard and costly to achieve, especially if accurate and noise-free information is needed. Moreover, centralized strategies are notoriously poorly scalable and not robust, due to the existence of single points of failure~\cite{2005-YonAriTsu,2008-KubKroWah,2013-ErhHir,2012-MarKarHu_Kra}.

In this paper we propose two algorithms that have instead the following characteristics:
{\it (i)}~there is no central processing unit;
{\it (ii)}~each agent is only able to exchange information with its neighbors in the communication network;
{\it (iii)}~the network, modeled as an undirected graph, can have any topology, provided that it is connected;
{\it (iv)}~each agent is able only to perform local sensing and computation; and 
{\it (v)}~the amount of memory and number of computations per step needed by each local instance of the algorithm do not depend on the number of agents but only on the number of communication neighbors.
The only assumptions needed are that each agent is\rev{endowed with a planar manipulator that is} able to apply a wrench to the load at its contact point and to measure the velocity of such contact point. Any other measurement (such as, e.g., position, distance, acceleration, and gyro measurements) is not available to the agents. Finally, nothing is known about the manipulated load.
The approaches are totally distributed, and rely on the geometry of the rigid body kinematics, the rigid body dynamics, on nonlinear observation theory, and on consensus strategies.

\emph{Related works:} In~\cite{2015-WanSch}, a decentralized motion control approach based on force consensus, which does not rely on explicit communication among the agents, is proposed. However, the result is achieved at the expense of assuming the presence of a leader agent that steers the load and on an even number of agents arranged in a particular shape called by the authors `centrosymmetric'. On the contrary, here we assume that agents can actually communicate, yet we do not assume the presence of a leader and we allow for any unknown spatial arrangement of the contact points. 
A communication-less adaptive motion control strategy is proposed in \cite{2018-CulSch}, under the assumption of a centralized measurement of the center of mass velocity and the load angular velocity. In the methods proposed here, no centralized measurement is needed. Furthermore, differently from~\cite{2015-WanSch,2018-CulSch} our two methods let each agent estimate all the parameters of the problem, thus paving the way for the utilization of any control task (e.g., motion control, force control, etc.) on top of the proposed estimation algorithm.
The authors in~\cite{2018-MarPie} show how a `communication-less' parameter estimation can be achieved by adding some stronger assumptions. The method assumes initial parallel execution of synchronized control sequences by all the agents, which requires a form of centralization, prevents the  simultaneous estimation of the parameters while performing the control task, and is not suited to estimate time-varying  parameters.  In our two methods, a-priori synchronization is not required, and the second method (see Sec.~\ref{sec:TimeVarInertParams}) can estimate time-varying parameters while performing any control task. The method in~\cite{2018-MarPie} assumes also that the robot are localized in position and orientation on a common frame and that they have enough strength to perform a hybrid position/force control and to lift the load from the ground, exploiting the gravity to estimate the mass. Such assumptions are absent in our setting. In fact, our main contribution is to demonstrate that if communication is available then it is possible to solve the estimation in a fully distributed way with minimal sensing. 

\fi

\section{Model and Problem Statement}\label{sec:probStat}
\begin{figure}[t]
\centering
\includegraphics[width=0.35\textwidth]{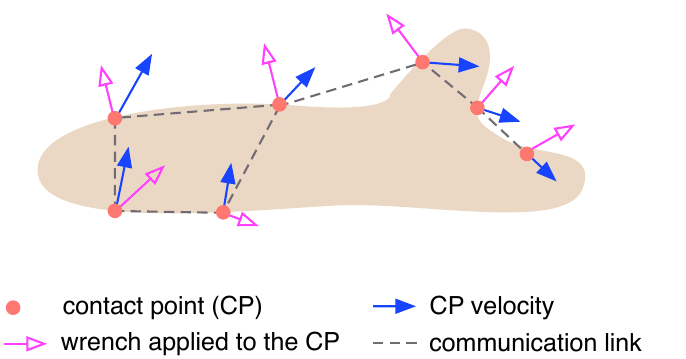}
\caption{\rev{\red{[AR: ATTENZIONE TYPO NELLA FIGURA, TORQUE NOT TORQE]}\blue{AP:AntonioF., hai tu il sorgente di questa immagine?}}
Conceptual representation of a group of $n$ agents manipulating an object on a plane. The orange dots represent the contact points of each agent, each blue arrow is the velocity of the contact point measured by each agent and each magenta hollow arrow is the wrench (force and torque) applied by each agent at the corresponding contact point. Dashed lines represent the communication links between agents, which all together constitute the communication graph.\rev{Top view of a team of five mobile manipulators performing a planar manipulation task. Each agent can exert a force and torque on the object by means of a planar manipulator (only force is displayed in the picture), and can only measure the velocity of its contact point.}}
\label{fig:ProbStat}
\end{figure}

In this section, we formally define the problem of distributively estimating all the parameters and the time-varying quantities 
that are present in the mechanical model of a team of $n$ planar robotic agents that cooperatively manipulate an unknown planar load, as illustrated in Fig.~\ref{fig:ProbStat}.  
We assume $n$ to be constant and known to all robots. This assumption can be easily relaxed by implementing one of the several algorithms for the distributed estimation of a graph size~\cite{2000-Tel}. 

We denote the inertial frame with $\mathcal{W}=O-\mathbf{xy}$ 
and the load body frame with $\mathcal{B}=C-\mathbf{x}_B\mathbf{y}_B$, where $C$ is the center of mass (CoM) of the unknown body, indicated with $B$. We indicate with $\mathbf{p}_C\in\mathbb{R}^2$ and $\mathbf{v}_C=\dot{\mathbf{p}}_C$ the position and velocity of $C$ expressed in $\cal W$, respectively, and with $\omega\in\mathbb{R}$ the intensity of the load angular velocity, hereafter called simply \emph{angular rate}. 
The dynamics of the manipulated load is described by the rigid body dynamical equation
\begin{equation}
\mathbf{M} \dot{\boldsymbol{\nu}} + \mathbf{g} = \mathbf{u},
\label{eq:dynamics_load}
\end{equation}
where $\boldsymbol{\nu}=(\mathbf{v}_C^\top\;\omega)^\top\in\mathbb{R}^3$ is the twist of $B$; $\mathbf{M}={\rm diag}(m,m,J)\in\mathbb{R}^{3\times 3}$ is the inertia matrix with $m>0$ and $J>0$ being the mass and the rotational inertia of the load, respectively; $\mathbf{g}\in\mathbb{R}^3$ is the wrench resulting from the environmental forces such as friction or gravitation. Without loss of generality, in this work we set $\mathbf{g} = \mathbf{0}$. This is equivalent to assume a horizontal workspace and a wheeled load, for which the friction effects are negligible~\cite{2016-FraMag}. 
 Finally, $\mathbf{u}\in\mathbb{R}^3$ denotes the external wrench applied by the agents to $B$, which will be characterized in the following.
All the previous quantities are expressed with respect to the frame $\mathcal{W}$.

Each agent $i$ contributes to the overall manipulation by exerting a wrench $\mathbf{u}_i=(\mathbf{f}_i^\top\;\tau_i)^\top\in\mathbb{R}^3$,
 expressed in $\mathcal{W}$, with $i=1 \ldots n$. The force $\mathbf{f}_i\in\mathbb{R}^2$ is applied by agent $i$ to a contact point $C_i$ of $B$,
 and $\tau_i\in \mathbb{R}$ is the intensity of the torque applied about the normal direction to the plane $\mathbf{xy}$.
We assume that contact points do not overlap, i.e., $C_i\neq C_j$, $\forall i,j=1\ldots n$.
The total external wrench applied to~$B$ is
$\mathbf{u} = \sum_{i=1}^n \mathbf{G}_i \mathbf{u}_{i} = \mathbf{G} \bar{\mathbf{u}}$, %
where $\mathbf{G}_i\in\mathbb{R}^{3\times 3}$ is the partial grasp matrix, $\mathbf{G}\in\mathbb{R}^{3\times 3 n}$ is the grasp matrix, and $\bar{\mathbf{u}} = \left({\mathbf{u}_1}^\top , \, \dots, \, {\mathbf{u}_n}^\top\right)^\top$ is the stacked applied wrench vector that groups the generalized contact force components transmitted through the contact points~\cite{2008-PraTri}.
The partial grasp matrix is defined as $\mathbf{G}_i = \mathbf{P}_i \bar{\mathbf{R}}_i$, where 
$\mathbf{P}_i = 
\left[\begin{smallmatrix}
\mathbf{I}_{2\times 2} & {\bf 0}_{2\times 1} \\
\left[(\mathbf{p}_{C_i} - \mathbf{p}_C)^\perp\right]^\top & 1 \\
\end{smallmatrix}\right]$
and $\bar{\mathbf{R}}_i=\mathbf{I}_{3\times 3}$, in our setting, for all $i=1 \ldots n$.
Here, $\mathbf{p}_{C_i}\in\mathbb{R}^2$ is the position of $C_i$ in $\cal W$.
The operator $(\cdot)^\perp$ rotates a vector by an angle of $\pi/2$, as is defined as 
$\mathbf{q}^{\perp}= Q \mathbf{q}=
[-q^{y}\; q^{x}]^\top$ where 
$Q=
\left[
\begin{smallmatrix}
0 & -1\\
1 & 0
\end{smallmatrix}
\right]$.
Hence, the dynamics~\eqref{eq:dynamics_load} of the manipulated load is given by %
\begin{equation}
\begin{bmatrix}
\dot{\mathbf{v}}_C\\
\dot \omega
\end{bmatrix}
=
\sum_{i=1}^{n}
\begin{bmatrix}
m^{-1}\mathbf{I}_{2\times 2} & {\bf 0}_{2\times 1} \\
J^{-1}\left[(\mathbf{p}_{C_i} - \mathbf{p}_C)^\perp\right]^\top & J^{-1} \\
\end{bmatrix}
\begin{bmatrix}
\mathbf{f}_i\\
\tau_i
\end{bmatrix}.
\label{eq:dynamics_1}
\end{equation}

Let $\mathbf{p}_{G}\in\mathbb{R}^2$ represent the position of the geometric center $G$ of the contact points expressed in $\mathcal{W}$, i.e.,
$\mathbf{p}_{G} = \frac{1}{n}\sum_{i=1}^n \mathbf{p}_{C_i}$.
We compactly define $\mathbf{z}_i = \mathbf{p}_{C_i} - \mathbf{p}_G$ and $\mathbf{z}_C = \mathbf{p}_G - \mathbf{p}_C$.
Hence, substituting $\mathbf{p}_{C_i} - \mathbf{p}_{C} = \mathbf{z}_i +\mathbf{z}_C $ in~\eqref{eq:dynamics_1} we obtain
\begin{equation}
\begin{bmatrix}
\dot{\mathbf{v}}_C\\
\dot \omega
\end{bmatrix}
=
\sum_{i=1}^{n}
\begin{bmatrix}
m^{-1}\mathbf{I}_{2\times 2} & {\bf 0}_{2\times 1} \\
J^{-1}{\mathbf{z}_i^\perp}^\top & J^{-1} \\
\end{bmatrix}
\begin{bmatrix}
\mathbf{f}_i\\
\tau_i
\end{bmatrix} +
\begin{bmatrix}
{\bf 0}_{2\times 1}\\
J^{-1}{\mathbf{z}_C^\perp}^\top \\
\end{bmatrix}
\mathbf{f}_i.
\label{eq:dynamics_rewritten}
\end{equation}

Inspecting the dynamics~\eqref{eq:dynamics_rewritten}, it is possible to see~\cite{2016b-PetFraDipRiz} that in order to effectively manipulate the load by controlling its velocity $\mathbf{v}_C$ and angular rate $\omega$, it is of fundamental importance that each agent $i$ has an estimate of the constant parameters $m$ and $J$, the time-varying vectors $\mathbf{z}_i(t)$ and $\mathbf{z}_C(t)$, and the quantities to be controlled, i.e., $\mathbf{v}_C(t)$ and $\omega(t)$. 

Finally, the communication network among agents is modeled as a connected undirected graph $\mathcal{G}$, whose both node and link set are assumed to be time-invariant. The set $\mathcal{N}_i$ denotes the se of one-hop neighbors of agent $i$ in the communication graph, while $\mathcal{A}$ denotes the graph adjacency matrix. 

The problem under investigation is formally stated next. 
\begin{problem*}[Distributed Estimation in Multi-Agent Cooperative Manipulation]\label{prob:main}
Given $n$ agents communicating through an
 ad-hoc network
 and manipulating an unkown load $B$; assume that each agent~$i$ can only
\begin{enumerate}
\item locally measure the velocity $\mathbf{v}_{C_{i}}$ of the contact point $C_{i}$,
\item locally know the applied wrench $\mathbf{u}_{i}$ acting on $C_{i}$,
\item communicate with its one-hop neighbors in the communication network. 
\end{enumerate}
Design a \emph{fully-distributed algorithm} such that each agent~$i$ is able to estimate the following six quantities: 
\begin{enumerate}
\item the (constant) mass $m$ of the load,
\item the (constant) rotational inertia $J$ of the load,
\item the (time-varying) relative position $\mathbf{z}_i(t)$ of $C_i$ with respect to the geometric center $G$ of the contact-points,
\item the (time-varying) relative position $\mathbf{z}_C(t)$ of the Center of Mass (CoM) $C$ of $B$ with respect to $G$,
\item the (time-varying) velocity $\mathbf{v}_C(t)$ of $C$, and
\item the (time-varying) angular rate $\omega(t)$ of $B$.
\end{enumerate}
\end{problem*}

In this work, we consider a strict definition of distributed algorithm, such that it is highly scalable with respect to the number of agents $n$. The main requisite of such an algorithm is that the complexity of the computations performed locally by each agent (in terms both of the number of elementary operations and of size of the input/output data) has to be constant with respect to the number of agents $n$~\cite{2013e-RobFraSecBue}.

In the next sections we shall constructively prove that the Problem is solvable as long as the communication network is connected, i.e., there exists a multi-hop communication path from any agent to any other agent in the network.

\section{Overview of the two Algorithms}\label{sec:overview}

An overview of the first proposed distributed estimation algorithm is given in the scheme of Fig.~\ref{fig:AlgoOverview}. Each rectangular box in the scheme corresponds to a computation performed locally by each agent~$i$. Each circle, instead, corresponds to a consensus-like distributed algorithm that is used to compute the only five global quantities that we shall prove to be needed in the distributed estimation process. The number of these global quantities is independent from the number of agents, and they can be estimated using standard distributed algorithms. Therefore, the overall distributiveness of the approach is ensured. The convergence of the adopted distributed algorithms requires only that the overall communication graph is connected (no all-to-all communication is required). The same applies for our distributed estimation algorithm.

\begin{figure}[t]
\centering
\includegraphics[width=.95\columnwidth]{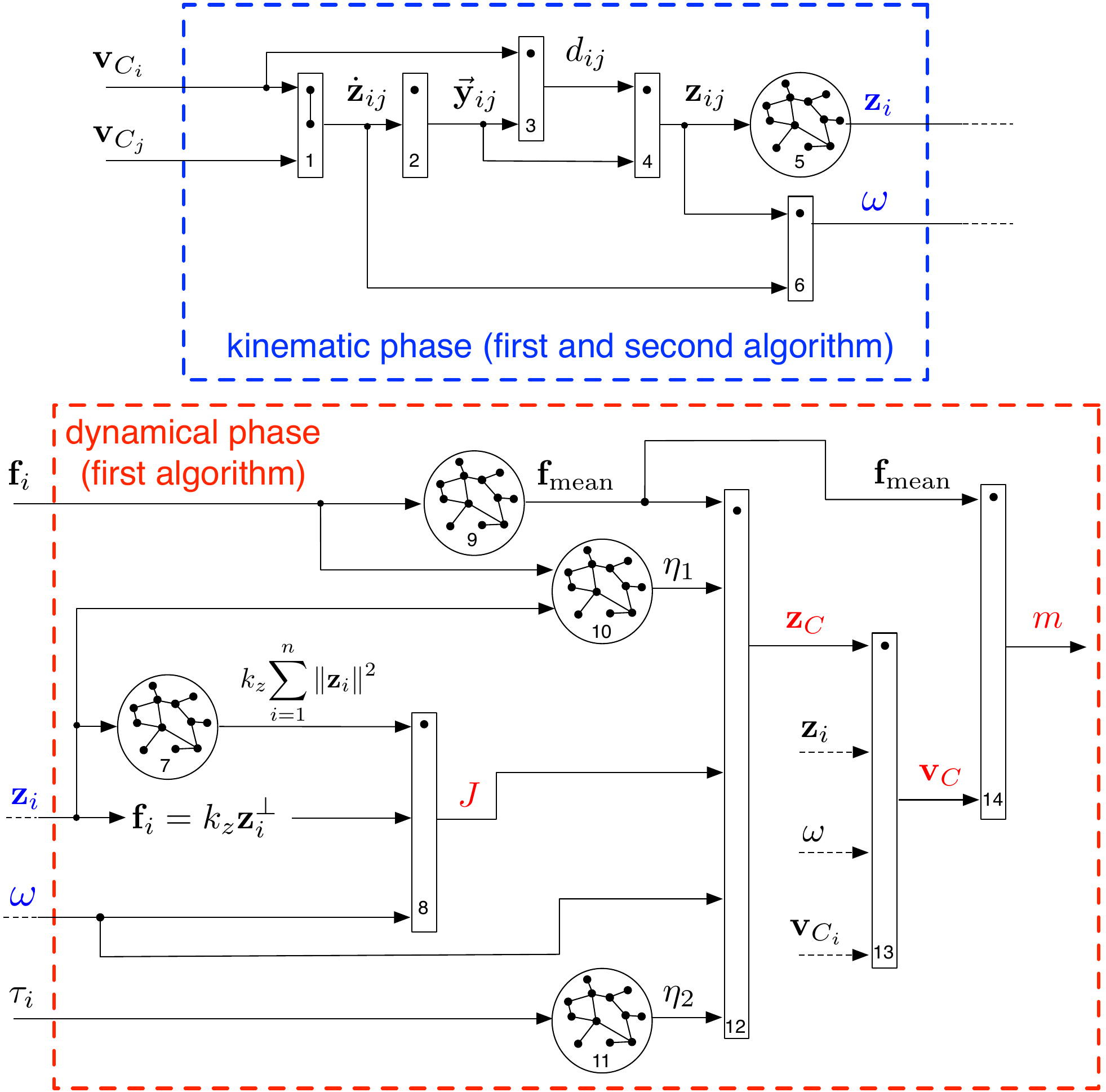}
\caption{Overview of the first proposed distributed estimation algorithm. \textbf{Top} (dashed blue box): \emph{purely kinematic} phase, where only the velocity measurements and the rigid body kinematics are used. After this phase, the estimates of the time-varying quantities $\mathbf{z}_i(t)$ and $\omega(t)$ (in blue) become available to each agent~$i$. \textbf{Bottom} (dashed red box): \emph{dynamical phase}, where also the knowledge of the wrenches and the rigid body dynamics are used. After this phase, the quantities $J$, $\mathbf{z}_C(t)$, $\mathbf{v}_C(t)$, and $m$ (in red) become available to each agent~$i$. %
}
\label{fig:AlgoOverview}
\end{figure}

To better understand the overall functioning of the algorithm, it is convenient to logically decompose its structure in a \emph{purely kinematic} phase, followed by a \emph{dynamical} one. In the former, only the rigid body kinematics constraints and the velocity measurements are used. After this phase, each agent~$i$ is able to estimate the time-varying quantities $\mathbf{z}_i(t)$ and $\omega(t)$. 
In the latter, the applied wrench and the rigid body dynamics are also used. After this phase, each agent is able to estimate the remaining quantities, i.e., $J$, $\mathbf{z}_C(t)$, $\mathbf{v}_C(t)$, and $m$. 
The two phases are described in Sections~\ref{sec:kyn_phase} and~\ref{sec:dyn_phase}, respectively.

All the estimation blocks are cascaded, hence convergence/inconsistency issues of  feedback estimation structures do not affect our scheme. 
A clarification about the convergence of our strategy is in order. Some steps of the estimation procedure are achieved through averaging consensus algorithms, which are known to converge asymptotically. Although this aspect theoretically yields infinite convergence times, an \textit{$\epsilon$-approximate global consensus}~\cite{2017-ManHad}, up to an arbitrary accuracy, can be achieved after a convenient finite time interval. \rev{, regulated, as commonly known, by the second smallest eigenvalue of the Laplacian matrix of the agents' communication graph~\cite{2011-OlsTsi}.}

The first estimation algorithm assumes constant $J$ and $m$ and represents the best choice, in terms of noise filtering, in that case. The estimation of $J$ requires a special wrench input, which prevents the use of another control algorithm in that phase.  To overcome such possible drawbacks, a second estimation algorithm is also proposed. The second algorithm is designed as a variant of the first one, and described in Sec.~\ref{sec:TimeVarInertParams}, see Fig.~\ref{fig:AlgoOverviewNew}. It can deal with changing $J$ and $m$ and does not require any particular wrench in any of its phases.
\section{Kinematic Phase}\label{sec:kyn_phase}

The objective of this phase 
is to distributively compute an estimate of the time-varying quantities $\mathbf{z}_i(t)$ and $\omega(t)$, based only on the locally measured velocities and the rigid body kinematic constraints.
The basic idea of this phase is to split the estimations of $\mathbf{z}_i(t)$ and $\omega(t)$ in two parts. The former is common to both estimations and essentially consists of the estimation of $\mathbf{z}_{ij}(t)=\mathbf{p}_{i}(t)-\mathbf{p}_{j}(t)$. This part is described in Sec.~\ref{sec:est_z_ij}. The latter comprises two separate estimators of $\mathbf{z}_i(t)$ and $\omega(t)$ and is described in Sec.~\ref{sec:est_z_i_omega}

The reason for passing through the estimation of the quantities $\mathbf{z}_{ij}$ is briefly explained in the following. In~\cite{2012-AraCarSagCal}, a distributed algorithm is proposed that allows the estimation of the centroid of the positions of a network of agents by only measuring the relative positions between pairs of communicating agents.
This algorithm can be used 
to distributively estimate $\mathbf{z}_i(t)$ if each pair of neighbors 
know the relative position of the contact points $C_i$ and $C_j$, i.e., $\mathbf{z}_{ij}(t)$. 
Nevertheless, here each agent only measures the velocity of its contact point and not its position. Our first contribution is to show that, thanks to the rigid body constraint, it is possible to estimate $\mathbf{z}_{ij}(t)$ only resorting to the measures $\mathbf{v}_{i}(t)$ and $\mathbf{v}_{j}(t)$.
Hereinafter, to enhance readability, we shall drop the time dependence of variables where such a dependence is clear from the context.

\subsection{Estimation of $\mathbf{z}_{ij}(t)$}\label{sec:est_z_ij}
\rev{\red{[AR: il problema del sistema di riferimento influenza anche questa fase? il revisore parlava da eq 16 in poi, io la metto li, se invece va messa qui la spostiamo]. }\blue{AP: beh, si. Direi che il problema dei SdR sussiste fin dall'inizio} \blue{[AF: si' vale dall'inizio.]}} The time-varying vector $\mathbf{z}_{ij}(t)$ that we want to estimate has to obey to the \emph{nonlinear} rigid body constraint
\begin{align}\label{eq:rigid_body_dist}
\mathbf{z}_{ij}^\top\mathbf{z}_{ij} = {\rm const}.
\end{align}
This implies that, even though the direction of $\mathbf{z}_{ij}(t)$ may vary in time, its norm $\|\mathbf{z}_{ij}\|$ is constant. 
Taking the time derivative of both sides of~\eqref{eq:rigid_body_dist} yields
$\dot{\mathbf{z}}_{ij}^\top\mathbf{z}_{ij} = 0$,
which implies that the directions of $\mathbf{z}_{ij}$ and $\dot{\mathbf{z}}_{ij}^\perp = Q\dot{\mathbf{z}}_{ij}$ coincide.
We can then explicitly decompose $\mathbf{z}_{ij}$ in two factors
\begin{align}\label{eq:z_ij_decomp}
 \mathbf{z}_{ij} 
 = d_{ij} \vec{\mathbf{y}}_{ij},
\end{align}
where $\vec{\mathbf{y}}_{ij}= \dot{\mathbf{z}}_{ij}^\perp\big/ \|\dot{\mathbf{z}}^\perp_{ij}\|\in\mathbb{S}^1$ is the unit vector denoting the time-varying oriented line (axis) along which $\mathbf{z}_{ij}$ lies, and $d_{ij}\in\mathbb{R}$ is the coordinate of $\mathbf{z}_{ij}$ on $\vec{\mathbf{y}}_{ij}$.

Let each agent $i$ send to all $j \in \mathcal{N}_i$ the (measured) velocity of its contact point $\mathbf{v}_{C_i}$ using the one-hop communication links. Then, each agent~$i$ can compute the velocity difference 
\begin{align}
\dot{\mathbf{z}}_{ij} = \mathbf{v}_{C_i}-\mathbf{v}_{C_j},
\label{eq:local_vel_difference}
\end{align}
and the corresponding orthogonal vector $\dot{\mathbf{z}}_{ij}^\perp $, for each $j\in{\cal N}_i$. As a consequence, $\vec{\mathbf{y}}_{ij}$ is locally available to each agent~$i$, $\forall j\in{\cal N}_i$. This is the first milestone of our algorithm, which is formally stated in the following result.
\begin{result}\label{res:y_ij}
The axis $\vec{\mathbf{y}}_{ij}$ along which $\mathbf{z}_{ij}$ lies is directly computed from local measurements and one-hop communication as
$\vec{\mathbf{y}}_{ij}
=
{\dot{\mathbf{z}}}_{ij}^\perp / \|\dot{\mathbf{z}}_{ij}^\perp\|
=
(\mathbf{v}_{C_i} - \mathbf{v}_{C_j})^\perp / \| \mathbf{v}_{C_i} - \mathbf{v}_{C_j}\| $,
as long as $\|\dot{\mathbf{z}}_{ij}\|=\|\mathbf{v}_{C_i} - \mathbf{v}_{C_j}\|\neq 0$.
\end{result}

\medskip
To obtain the sought $\mathbf{z}_{ij}$, 
only the estimation of $d_{ij}$ is left. 
Due to the rigid body constraint~\eqref{eq:rigid_body_dist}, 
$|d_{ij}| = \|\mathbf{z}_{ij}\| = {\rm const}$
holds, i.e., $d_{ij}$ is either equal to $\|\mathbf{z}_{ij}\|$ or to $-\|\mathbf{z}_{ij}\|$, depending on the fact that $\vec{\mathbf{y}}_{ij}$ and $\mathbf{z}_{ij}$ have the same or the opposite direction. 
However, since in~\eqref{eq:z_ij_decomp} both $\mathbf{z}_{ij}(t)$ and $\vec{\mathbf{y}}_{ij}(t)$ are continuous functions of time (for $\vec{\mathbf{y}}_{ij}$ this holds in any open interval in which $\|\dot{\mathbf{z}}_{ij}\|\neq 0$), we have that $\mbox{sign}(d_{ij}) = {\rm const} \;\; \forall t\in T$, as long as $\|\dot{\mathbf{z}}_{ij}\| \neq 0, \;\; \forall t\in T$.
Thus, in any time interval $T$ in which $\|\dot{\mathbf{z}}_{ij}\|\neq 0$ and under the reasonable assumption that the input wrenches are continuous in $t$ over $T$, we can differentiate both sides of~\eqref{eq:z_ij_decomp}, thus obtaining
\begin{align}\label{eq:diff_eq:z_ij_decomp}
\dot{\mathbf{z}}_{ij} = d_{ij} \dot{\vec{\mathbf{y}}}_{ij},
\end{align}
which forms a linear estimation problem that agent~$i$ can locally solve to estimate the sought $d_{ij}$. In fact,
among the quantities that appear in~\eqref{eq:diff_eq:z_ij_decomp}, 
agent~$i$ knows the quantity $\dot{\mathbf{z}}_{ij}$ and the time integral of $\frac{\rm d}{\rm d t}\vec{\mathbf{y}}_{ij}$, i.e., $\vec{\mathbf{y}}_{ij}$. Therefore, the estimate of $d_{ij}$ can be carried out using a standard online linear estimation technique described, e.g., in~\cite{1991-SloLi_}, and summarized in the Appendix of~\cite{2014k-FraPetRiz}. 
This technique has also the property of averaging out the possible measurement noise. To this aim, the time interval $T$ can be tuned on the basis of the noise level that has to be averaged out in the velocity measurements. 

Note that after the first estimation of $d_{ij}$ there is no need to further estimate $|d_{ij}|$, since this is a constant quantity. Thus, the only signal to keep track of is ${\rm sign}(d_{ij})$. This can be instantaneously achieved by implementing two linear observers of the dynamic system~\eqref{eq:diff_eq:z_ij_decomp}: one that assumes ${\rm sign}(d_{ij})=1$ and the other assuming ${\rm sign}(d_{ij})=-1$. Then, it is sufficient to select, at each time-step, the observer that provides the best estimate in terms of, e.g., measurement residual. 

To conclude the description of the algorithm, every time it happens to be $\|\dot{\mathbf{z}}_{ij}\|=0$, the last estimate of $\mathbf{z}_{ij}$ is kept frozen. In a real implementation the introduction of a suitable threshold to cope with the possible noise is recommended.

We summarize the results of this section in the following.
\begin{result}\label{res:z_ij}
The vector $\mathbf{z}_{ij}$ is estimated locally by agent $i$ and $j$ by the separate computation of two quantities
\begin{itemize}
\item $\vec{\mathbf{y}}_{ij}$ (time-varying axis), computed directly from $\mathbf{v}_{C_i} - \mathbf{v}_{C_j}$ (see Result~\ref{res:y_ij})
\item $d_{ij}$ (norm-constant coordinate along $\vec{\mathbf{y}}_{ij}$), computed from $\mathbf{v}_{C_i} - \mathbf{v}_{C_j}$ by  solving~\eqref{eq:diff_eq:z_ij_decomp} via filtering and applying online Linear Least Squares (LLS).
\end{itemize} 
\end{result} 

This part of the algorithm is referred with blocks $1,2,3$, and $4$ in the diagram of Fig.~\ref{fig:AlgoOverview}.

\subsection{Estimation of $\mathbf{z}_{i}(t)$ and $\omega(t)$}\label{sec:est_z_i_omega}

The estimated quantities $\mathbf{z}_{ij}(t)$ provide a straightforward way to estimate $\mathbf{z}_i$. This estimation phase corresponds to block $5$ in the diagram of Fig.~\ref{fig:AlgoOverview}. 
\begin{result}\label{res:z_i}
Once the estimate of $\mathbf{z}_{ij}(t)$ is available to each agent~$i$, $\forall j\in{\cal N}_i$, each agent~$i$ estimates $\mathbf{z}_{i}$ by using the centroid estimation algorithm described in~\cite{2012-AraCarSagCal}.
\end{result}

In order to estimate the angular rate $\omega$, we use the following relation from rigid body kinematics
\begin{align}
\omega \mathbf{z}_{ij} = -\dot{\mathbf{z}}_{ij}^\perp, \label{eq:rigid_body_omega}
\end{align}
where
 $\dot{\mathbf{z}}_{ij}^\perp$ is locally computed from~\eqref{eq:local_vel_difference}, and $\mathbf{z}_{ij}$ is locally estimated, as shown in Sec.~\ref{sec:est_z_ij}.
Multiplying both sides of~\eqref{eq:rigid_body_omega} by $\mathbf{z}_{ij}^\top$, we obtain that, for each pair of communicating agents $i$ and $j$, an estimate of $\omega$ is directly given by
\begin{align}
\omega = -\left(\mathbf{z}_{ij}^\top\dot{\mathbf{z}}_{ij}^\perp\right)\left(\mathbf{z}_{ij}^\top\mathbf{z}_{ij}\right)^{-1}. \label{eq:omega_estimation}
\end{align}
\begin{result}\label{res:w_estim}
$\omega$ is locally computed using~\eqref{eq:omega_estimation}, where $\dot{\mathbf{z}}_{ij}$ comes from direct measurement and one-hop communication and $\mathbf{z}_{ij}$ from Result~\ref{res:z_ij}.
\end{result}
This part of the algorithm corresponds to block $6$ in the diagram of Fig.~\ref{fig:AlgoOverview}.
The use of~\eqref{eq:omega_estimation} provides agent~$i$ with as many estimates of $\omega$ as the number of its one-hop neighbors $|{\cal N}_i|$. In the ideal case of noiseless velocity measurements, all those estimates are identical. In the case of noisy velocity measurements, this redundancy can be exploited to average out the noise either at the local level (e.g., by averaging the different estimates corresponding to each neighbor) or at the global level (by, e.g., using some dynamic consensus strategy among all agents~\cite{2010-ZhuMar}). \rev{For the sake of presentation clarity we restrain ourselves from presenting these minor details here.} %
Clearly, the order of the dynamic consensus algorithm used is strictly related to the time variations of $\omega$ and, therefore, to the time variations of its estimates~\cite{2010-ZhuMar}. Moreover, such consensus will theoretically converge asymptotically. %
However, dynamic consensus algorithms, able to track the average of their dynamic inputs up to a given bound, can be used~\cite{2013-KiaCorMar}.
Clearly, a measurement of the angular rate $\omega$ can also be obtained equipping each agent with a gyroscope placed at the contact point. Nevertheless, one of the goals, and contributions, of our work is to show that this additional sensor is not strictly needed  to accomplish the estimation task.

\begin{remark}
This estimation approach relies on the agreement on a common reference frame. In fact, the measured velocities $\mathbf{v}_{C_{i}}$ used to estimate $\mathbf{z}_i$, are referred to the same reference frame. Two possible approaches can be put forward in real-world applications: \emph{i)} agents should communicate to agree on a common reference frame, or \emph{ii)} agents use additional sensors (i.e., vision,
compass, or infrared array) to perform conversions between quantities related to different frames. \rev{[AR:PS MA siamo sicuri che le norme non si possono sommare? INOLTRE: abbiamo reference a supporto dei precedenti punti i) e ii) ? ]
\blue{AP: lo stesso vettore espresso in due frame di riferimento diversi ha due diverse norme, no? Ho in mente il caso di un punto mobile. A seconda che il punto sia descritto da un vettore in un frame mobile o in uno fisso, cambia il modulo del vettore associato.}
\red{AF: sbagliato, infatti la norma non cambia da un sistema di riferimento all'altro, quindi il consensus si può fare anche con sistemi di rif diversi. Tuttavia l'applicazione della forza $\mathbf{f}_i=k_z\mathbf{z}_i^\perp$ si basa sul fatto che ognuno ha stimato $\mathbf{z}_i$ con il metodo descritto prima che suppone il sistema di riferimento comune.}} 
\label{remark:Ref_Frame}
\end{remark}

\section{Dynamical Phase}\label{sec:dyn_phase}

The objective of this phase (corresponding to the dashed red box in Fig.~\ref{fig:AlgoOverview}) is to estimate the remaining quantities, i.e., the (constant) rotational inertia $J$, the (time-varying) position $\mathbf{z}_C(t)$ of the CoM of $B$ relative to the geometric center $G$ of the contact points, the (time-varying) velocity of the CoM $\mathbf{v}_C(t)$, and the (constant) mass $m$. The order in which they are estimated follows a dependency hierarchy, since some phase needs information about the outcome of previous ones. Thus, the order of estimation cannot be altered without preventing the correct execution of the proposed strategy. 
This phase makes use of the velocity measurements $\mathbf{v}_{C_i}$, the applied wrench $\mathbf{u}_{i}$, as well as the rigid body kinematics and dynamics.
The basic operations executed in this phase are summarized in the following: 
\begin{enumerate}
\item \emph{(estimation of $J$)} we exploit the knowledge of $\mathbf{z}_i$ to apply a particular input wrench that cancels the effect of $\mathbf{z}_C$ in~\eqref{eq:dynamics_rewritten}, thus, obtaining a reduced dynamics in which $J$ is the only unknown; then, we estimate $J$ using LLS;
\item \emph{(estimation of $\mathbf{z}_C(t)$)} we use all the previously estimated quantities, the rotational dynamics in~\eqref{eq:dynamics_rewritten}, and the rigid body constraint to recast the estimation of $\mathbf{z}_C$ to a nonlinear observation problem that can be locally solved by each agent with an observer designed by us;
\item \emph{(estimation of $\mathbf{v}_C(t)$)} we use rigid body kinematics to compute $\mathbf{v}_C(t)$ from all the quantities estimated so far;
\item \emph{(estimation of $m$)} we use a distributed estimation of the total force produced by the agents and $\mathbf{v}_C(t)$ to finally estimate the constant $m$ using LLS.
\end{enumerate}

\subsection{Estimation of $J$}\label{sec:est_J}

Assuming $J$ as a constant quantity, our strategy is to impose a specific wrench for a short time interval in order to let its estimate converge close enough to the real value.
After this finite time interval, any wrench can be applied again. This feature enables the concurrent execution of estimation and ordinary manipulation tasks. 

Let us isolate the rotational dynamics from~\eqref{eq:dynamics_rewritten} 
\begin{align}
\dot{\omega} &= \frac{1}{J}\sum_{i=1}^n {\mathbf{z}_i^\perp}^\top \mathbf{f}_i + \frac{1}{J}{\mathbf{z}_C^\perp}^\top \sum_{i=1}^n \mathbf{f}_i + \frac{1}{J} \sum_{i=1}^n\tau_i,
\label{eq:rotational_dynamics} 
\end{align}
where:
\begin{inparaenum}[\em (i)]
\item $J$ is the constant to be estimated;
\item $\omega(t)$ is locally known by each agent thanks to Result~\ref{res:w_estim};
\item $\mathbf{z}_i(t)$ is locally known by each agent thanks to Result~\ref{res:z_i};
\item $\mathbf{f}_i$ and $\tau_i$ are locally known by each agent, since they are applied by the agent itself;
\item $\mathbf{z}_C$ is still unknown. 
\end{inparaenum}
If we were able to eliminate $\mathbf{z}_C$ from~\eqref{eq:rotational_dynamics},
then $J$ would become the only unknown in~\eqref{eq:rotational_dynamics}. %
It is easy to verify that the influence of $\mathbf{z}_C$ in~\eqref{eq:rotational_dynamics} is eliminated if each agent~$i$ applies a force $\mathbf{f}_i$ such that $\sum_{i=1}^n \mathbf{f}_i=0$. 
A possible choice is to set $\mathbf{f}_i=k_z\mathbf{z}_i^\perp$, where $k_z\neq 0$ is an arbitrary constant. In fact, this choice implies 
$\sum_{i=1}^n \mathbf{f}_i = k_z\sum_{i=1}^n \mathbf{z}_i^\perp = k_z\sum_{i=1}^n (\mathbf{p}_{C_i}-{\mathbf{p}_{G}})^\perp = k_z Q \sum_{i=1}^n (\mathbf{p}_{C_i}-{\mathbf{p}_{G}}) = 0$.
Note that this force can be computed by each agent in a distributed way, since $\mathbf{z}_i$ is locally known thanks to Result~\ref{res:z_i}.

By applying $\mathbf{f}_i=k_z\mathbf{z}_i^\perp$, the rotational dynamics~\eqref{eq:rotational_dynamics} becomes
$\dot{\omega} = 
k_z J^{-1}\sum_{i=1}^n \|\mathbf{z}_i\|^2 + \frac{1}{J}\sum_{i=1}^n \tau_i$.
In order to further simplify the distributed computation, let us also impose $\tau_i=0$, $\forall i=1\ldots n$, limited to the time interval in which $J$ is estimated. Hence,~\eqref{eq:rotational_dynamics} is further simplified in
\begin{align}
\dot{\omega} = J^{-1} \; k_z\sum_{i=1}^n \|\mathbf{z}_i\|^2.\label{eq:J_lls_problem}
\end{align}
Equation~\eqref{eq:J_lls_problem} expresses a linear relation where the only unknown is the proportionality factor $J^{-1}$. In fact, $\omega$ and $k_z\Vert \mathbf{z}_i\Vert^2$ are locally known to each agent $i$, which implies that the constant quantity $k_z{\sum_{i=1}^n \Vert \mathbf{z}_i\Vert^2}$ can be computed distributively through an average consensus~\cite{2007-OlfFaxMur} right after the moment in which each agent is able to estimate $\mathbf{z}_i$ (block $7$ in the diagram of Fig.~\ref{fig:AlgoOverview}).
Therefore, the estimation of $J$ is recast in~\eqref{eq:J_lls_problem} as a LLS estimation problem that can be solved resorting to the same strategy used to estimate $d_{ij}$ in~\eqref{eq:diff_eq:z_ij_decomp}
(block $8$ in the diagram of Fig.~\ref{fig:AlgoOverview}). A summary follows.

\begin{result}
Each agent distributively computes the constant sum $k_z\sum_{i=1}^n \|\mathbf{z}_i\|^2$ using $\mathbf{z}_i$ from Result~\ref{res:z_i} followed by average consensus. Then, each agent~$i$ applies a force $\mathbf{f}_i=k_z\mathbf{z}_i^\perp$ for a given time interval, the moment of inertia $J$ is distributively computed by solving the LLS problem~\eqref{eq:J_lls_problem}. 
\end{result}

Ideally, in a noise-free setting, every agent concludes this phase with the same estimate of $J$. In realistic settings, where noise is present, each agent may have a slightly different estimate of $J$. Hence, a standard average consensus algorithm can be executed to average out the noise and improve the estimate of $J$. Also, in this case such consensus will theoretically converge asymptotically, %
but the convergence to a bounded ball centered in the average can be achieved in finite time and detected by means of suitable distributed strategies~\cite{2007-YadSal}.

\subsection{Estimation of $\mathbf{z}_C$}\label{sec:est_z_C(t)}
The main idea behind the estimation of the time-varying quantity $\mathbf{z}_C(t)$ is to rewrite~\eqref{eq:rotational_dynamics} in order to let only the following kinds of quantities appear (in addition to $\mathbf{z}_C(t)$):
\begin{itemize}
\item global quantities that can be distributively estimated;
\item local quantities available from the problem setting (measurements or inputs) or from the previous results.
\end{itemize}
Next, we demonstrate that such a rewriting is possible and also that the estimation of $\mathbf{z}_C(t)$ boils down to a solvable nonlinear observation problem.
Let us first decompose the local force $\mathbf{f}_i(t)$ in two parts and recall two important identities
\begin{align}
\mathbf{f}_i(t) = \frac{1}{n} &\sum_{i=1}^n \mathbf{f}_i(t) + \Delta \mathbf{f}_i(t) =\mathbf{f}_{\text{\rm mean}}(t) + \Delta \mathbf{f}_i(t),\label{eq:f_mean}\\
&\sum_{i=1}^n {\mathbf{z}_i^\perp}^\top=0 \quad\text{and}\quad
 \sum_{i=1}^n \Delta \mathbf{f}_i=0.\label{eq:zero_ident}
\end{align}
We can then rewrite~\eqref{eq:rotational_dynamics}, using~\eqref{eq:f_mean} and~\eqref{eq:zero_ident}, as 
$ \dot \omega =  J^{-1} \left(\sum_{i=1}^n {\mathbf{z}_i^\perp}^\top\right) \mathbf{f}_{\text{\rm mean}} + n J^{-1}{\mathbf{z}_C^\perp}^\top \mathbf{f}_{\text{\rm mean}}(t)+ J^{-1} \sum_{i=1}^n {\mathbf{z}_i^\perp}^\top\Delta \mathbf{f}_i +
+ \, J^{-1}{\mathbf{z}_C^\perp}^\top \sum_{i=1}^n\Delta \mathbf{f}_i 
+ J^{-1}\sum_{i=1}^n \tau_i =
 \underbrace{n J^{-1}{\mathbf{z}_C^\perp}^\top \mathbf{f}_{\text{\rm mean}}}_{{\mathbf{z}_C^\perp}^\top \bar{\mathbf{f}}}
+
\underbrace{J^{-1} \sum_{i=1}^n {\mathbf{z}_i^\perp}^\top \Delta \mathbf{f}_i}_{\eta_1} + \underbrace{J^{-1}\sum_{i=1}^n \tau_i}_{\eta_2}$, 
i.e.,
\begin{align}
\dot \omega = \,
{\mathbf{z}_C^\perp}^\top \bar{\mathbf{f}}
+ \eta_1 + \eta_2. 
\label{eq:omegadot}
\end{align}
The global quantities\footnote{The same considerations made in Remark~\ref{remark:Ref_Frame} are in order.} 
\begin{itemize}
\item $\bar{\mathbf{f}}= \dfrac{n}{J}\mathbf{f}_{\rm mean}$, 
\item $\eta_1=J^{-1} \sum_{i=1}^n {\mathbf{z}_i^\perp}^\top \Delta \mathbf{f}_i = J^{-1} \sum_{i=1}^n {\mathbf{z}_i^\perp}^\top \mathbf{f}_i$, and
\item $\eta_2=J^{-1}\sum_{i=1}^n \tau_i$
\end{itemize}
can be all distributively estimated in parallel using three instances of the dynamic consensus algorithm~\cite{2010-ZhuMar} (blocks $9$, $10$, and $11$ in the diagram of Fig.~\ref{fig:AlgoOverview}). %
The choice of the specific dynamic consensus algorithm strongly depends on the nature of the tracked signals. 
A dynamic consensus algorithm like the one presented in~\cite{2013-KiaCorMar} enables the estimate of its convergence time given the rate of convergence. The only mild assumption made is that the input signals are continuous and bounded~(in~\cite{2013-KiaCorMar}, Theorem 5.1).
The global quantity $\omega(t)$ is known thanks to Result~\ref{res:w_estim}.
The only unknown in~\eqref{eq:omegadot} is $\mathbf{z}_C(t)$.
Define $\eta = \eta_1 + \eta_2$. The following result holds:
\begin{result}\label{res:recast_omega_dyn}
The rotational dynamics is given by 
\begin{align}
\dot \omega = {\mathbf{z}_C^\perp}^\top \bar{\mathbf{f}} + \eta, \label{eq:omega_dyn_z_C}
\end{align}
where $\omega(t)$ is known thanks to Result~\ref{res:w_estim} and $\bar{\mathbf{f}}(t)$ and $\mathbf{\eta}(t)$ are locally known to each agent through distributed computation.
\end{result}

We use Eq.~\eqref{eq:omega_dyn_z_C} to form a dynamical system where $\omega$ and $\mathbf{z}_C(t)$ are the state variables and $\bar{\mathbf{f}}$ and $\eta$ are the inputs.
Recalling 
that 
$\mathbf{z}_C$ is a constant-norm vector, rigidly attached to the object, the following holds:
\begin{align}
\dot{\mathbf{z}}_C= \omega\,\mathbf{z}_C^\perp.\label{eq:zC_dyn}
\end{align}
Combining~\eqref{eq:omega_dyn_z_C} and~\eqref{eq:zC_dyn}, we obtain the nonlinear system
\begin{align}
\left\{
\begin{aligned}
\dot x_{1} &= -x_{2}x_{3} \\
\dot x_{2} &= x_{1}x_{3}\\
\dot x_{3} & = x_{1}u_2-x_{2}u_1 + u_3
\end{aligned}
\right., \quad y = x_3,
\label{eq:nonlinear_sys_zC}
\end{align}
where $\mathbf{z}_{C}=(z_{C}^x\;z_{C}^y)^\top=(x_{1}\;x_{2})^\top$ is the unknown part of the state vector, $\omega = x_{3}$ is the measured part of the state vector and, consequently, can be considered as the system output, and $\bar{\textbf{f}}=(\bar{f}_x\;\bar{f}_y)^\top = (u_1 \; u_2)^\top$, 
 $\eta = u_3$ are known inputs.

\begin{result}\label{res:reduced_prob_z_C}
Estimating $\mathbf{z}_C(t)$ is equivalent to observe the state $(x_{1}\;x_{2})^\top$ of the nonlinear system~\eqref{eq:nonlinear_sys_zC} with known output~$y=x_3=\omega$ and known inputs $u_1=\bar{f}_x$, $u_2=\bar{f}_y$, and $u_3=\eta$.
\end{result}

In~\cite{2015b-FraPetRiz}, we studied the observability of~\eqref{eq:nonlinear_sys_zC}:
\begin{prop} 
If $x_3 \not\equiv 0 $ and $\left( u_1 \; u_2\right)^\top \not\equiv \mathbf{0}$, then system~\eqref{eq:nonlinear_sys_zC} is locally observable in the sense of~\cite{1977-HerKre}. 
\end{prop}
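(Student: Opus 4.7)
The plan is to verify the Hermann--Krener observability rank condition~\cite{1977-HerKre} for the system~\eqref{eq:nonlinear_sys_zC}. First, I rewrite it in control-affine form $\dot x = F_u(x)$ with $F_u(x) = (-x_2 x_3,\; x_1 x_3,\; x_1 u_2 - x_2 u_1 + u_3)^T$ and output map $h(x) = x_3$. Treating $u=(u_1,u_2,u_3)$ as a frozen parameter, I then search for three iterated Lie derivatives of $h$ along $F_u$ whose gradients are linearly independent at the point of interest; by Hermann and Krener this is sufficient for local observability of the full state in a neighborhood of that point.

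The natural candidates are the three lowest orders. A direct computation gives $L^0_{F_u} h = x_3$, $L^1_{F_u} h = x_1 u_2 - x_2 u_1 + u_3$, and $L^2_{F_u} h = L_{F_u}(x_1 u_2 - x_2 u_1 + u_3) = -u_2 x_2 x_3 - u_1 x_1 x_3 = -x_3(x_1 u_1 + x_2 u_2)$. Their gradients with respect to $x$ assemble into the $3\times 3$ observability matrix whose rows are $(0,0,1)$, $(u_2,-u_1,0)$, and $(-x_3 u_1,\, -x_3 u_2,\, -(x_1 u_1 + x_2 u_2))$.

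Expanding this determinant along the first row yields $-x_3(u_1^2 + u_2^2)$, which is nonzero exactly when $x_3 \neq 0$ and $(u_1,u_2) \neq \mathbf{0}$. Under the hypotheses $x_3 \not\equiv 0$ and $(u_1,u_2)^T \not\equiv \mathbf{0}$, both pointwise conditions are satisfied simultaneously on a nonempty open set of times; on such a set the observability rank is $3$, and the Hermann--Krener theorem yields local observability of~\eqref{eq:nonlinear_sys_zC}. The only delicate point is the choice of which Lie derivatives to take: stopping at order $1$ only recovers a codistribution spanned by $dh=(0,0,1)$ and $(u_2,-u_1,0)$, which has rank at most $2$; it is the second-order derivative along the drift that injects the state dependence needed to resolve the $x_1,x_2$ components and produces the clean excitation criterion $u_1^2+u_2^2 \neq 0$ together with the motion condition $x_3\neq 0$. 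This matches physical intuition: the robots must exert a net force and the load must rotate for the off-center position $\mathbf{z}_C$ to be inferable from the induced angular acceleration.
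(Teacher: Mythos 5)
Your computation is correct and is exactly the standard Hermann--Krener observability rank argument that the paper defers to its reference~\cite{2015b-FraPetRiz}: the three gradients $\nabla(L^0_{F_u}h)=(0,0,1)$, $\nabla(L^1_{F_u}h)=(u_2,-u_1,0)$, $\nabla(L^2_{F_u}h)=(-x_3u_1,\,-x_3u_2,\,-(x_1u_1+x_2u_2))$ give $\det = -x_3(u_1^2+u_2^2)$, so the rank condition holds precisely when $x_3\neq 0$ and $(u_1,u_2)\neq\mathbf{0}$. The only loose point is your claim that the two non-identically-zero hypotheses automatically hold \emph{simultaneously} on an open set of times; in the Hermann--Krener pointwise framework this is better phrased as evaluating the codistribution at a state with $x_3\neq 0$ using an admissible input with $(u_1,u_2)\neq\mathbf{0}$, which is clearly what the proposition intends.
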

\begin{proof} Given in~\cite{2015b-FraPetRiz}. 
\end{proof}

Note that the applied torques $\tau_i$, for $i=1 \ldots n$ (which are included in $u_3$) have no influence on the observability of $\mathbf{z}_C(t)$.
In~\cite{2013-Agh}, an observability condition that involves the angular velocity is also given. However, the condition expressed in~\cite{2013-Agh} pertains the estimation of the kinematic parameters and requires that the direction of the angular velocity does not remain constant over the time, while in our setting the direction is constant and we only require that the norm is not constantly zero.
In~\cite{2015b-FraPetRiz}, we also proposed a nonlinear observer for system~\eqref{eq:nonlinear_sys_zC}, which is summarized in the following result. 

\begin{prop}\label{prop:observer}
Consider the following dynamical system
\begin{equation}
\label{eq:cm_observer_eq}
\left\{
\begin{aligned}
\dot {\hat x}_1 &= -\hat{x}_2 x_3 + u_2(y - \hat{x}_3)\\
\dot {\hat x}_2 &= \hat x_1 x_3 - u_1(y - \hat{x}_3)\\
\dot {\hat x}_3 &= \hat x_1 u_2 - \hat x_2 u_1 + k_e(y-{\hat x}_3) + u_3
\end{aligned}
\right. ,
\end{equation}
where $k_e>0$.
If $y(t)\not\equiv 0$ and $\left( u_1(t) \; u_2\right(t))^\top \not\equiv \mathbf{0}$, then~\eqref{eq:cm_observer_eq} is an asymptotic observer for~\eqref{eq:nonlinear_sys_zC}. Hence, defining $\hat{\mathbf{x}}= ({\hat x}_1 \ {\hat x}_2 \ {\hat x}_3)^\top$ and $\mathbf{x}= ( x_1 \ x_2 \ x_3)^\top$, one has that $\hat{\mathbf{x}}(t) \rightarrow \mathbf{x}(t)$ asymptotically.
\end{prop}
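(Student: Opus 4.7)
The plan is to analyze the observer via its error dynamics, build a quadratic Lyapunov function that exploits the skew structure of the coupling, and then close the argument by invoking the persistence-of-excitation hypotheses on $x_3$ and $(u_1,u_2)$.

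First I would introduce the estimation errors $e_i := x_i - \hat{x}_i$ for $i = 1, 2, 3$ and subtract the observer equations \eqref{eq:cm_observer_eq} from the plant \eqref{eq:nonlinear_sys_zC}. Using $y - \hat{x}_3 = -e_3$, the error dynamics read
\begin{align*}
\dot{e}_1 &= -x_3 e_2 - u_2 e_3, \\
\dot{e}_2 &= x_3 e_1 + u_1 e_3, \\
\dot{e}_3 &= u_2 e_1 - u_1 e_2 - k_e e_3.
\end{align*}
This system has a clean structure: the first two equations couple $(e_1,e_2)$ through a skew-symmetric $x_3$-rotation and through $(u_1,u_2)e_3$, while the third equation has the same $(u_1,u_2)$-coupling back and a stabilising $-k_e e_3$ term.

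Next I would choose the natural Lyapunov candidate $V(e) = \tfrac{1}{2}(e_1^2 + e_2^2 + e_3^2)$ and compute its derivative along the error trajectories. The bilinear terms involving $x_3 e_1 e_2$ cancel pairwise, and so do the terms $-u_2 e_1 e_3 + u_1 e_2 e_3$ coming from the first two equations with the terms $u_2 e_1 e_3 - u_1 e_2 e_3$ coming from the third. What survives is $\dot{V} = -k_e e_3^2 \le 0$, so $V$ is non-increasing, the error $e(t)$ stays bounded, and $e_3 \in L^2$. Assuming the inputs $u_1, u_2$ and the measured output $x_3$ are bounded (guaranteed in the physical setting, since they are produced by the robots and by the load dynamics), $\dot{e}_3$ is bounded, hence $e_3$ is uniformly continuous; Barbalat's lemma then yields $e_3(t) \to 0$.

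The main obstacle is promoting $e_3 \to 0$ to $(e_1, e_2) \to 0$, which is precisely where the hypotheses $x_3 \not\equiv 0$ and $(u_1, u_2)^T \not\equiv \mathbf{0}$ must be used: since the error system is \emph{non-autonomous}, LaSalle's invariance principle does not apply off the shelf. The plan is to argue that along any limit trajectory on which $e_3 \equiv 0$, the first two equations reduce to the pure rotation $\dot{e}_1 = -x_3 e_2$, $\dot{e}_2 = x_3 e_1$ (so that $e_1^2 + e_2^2$ is conserved), while forcing $\dot{e}_3 \equiv 0$ imposes the algebraic constraint $u_2 e_1 - u_1 e_2 \equiv 0$. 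Repeated differentiation of this constraint, fed back into the rotational dynamics driven by $x_3$, then yields a linear system in $(e_1, e_2)$ whose coefficient matrix is non-singular exactly under the two stated excitation conditions — this is the same observability rank condition already established in \cite{2015b-FraPetRiz}. To turn this invariance-set argument into actual convergence for the non-autonomous error system, I would either apply a LaSalle–Yoshizawa theorem or iterate Barbalat's lemma on $\dot{e}_3$ and $\ddot{e}_3$ (both of which are bounded under the standing assumptions), concluding that $e_1(t), e_2(t) \to 0$ and hence $\hat{\mathbf{x}}(t) \to \mathbf{x}(t)$. The delicate step will be checking that the pointwise observability condition translates into a uniform (in time) condition strong enough for this Barbalat-type closure.
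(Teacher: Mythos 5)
Note first that the paper itself does not prove this proposition here: its ``proof'' is a one-line deferral to the conference paper \cite{2015b-FraPetRiz}, so there is no in-text argument to compare against line by line. That said, your Lyapunov route is almost certainly the intended one, because the output-injection terms $u_2(y-\hat x_3)$ and $-u_1(y-\hat x_3)$ are designed precisely so that the cross terms cancel in $V=\tfrac12\|e\|^2$. Your error dynamics $\dot e_1=-x_3e_2-u_2e_3$, $\dot e_2=x_3e_1+u_1e_3$, $\dot e_3=u_2e_1-u_1e_2-k_ee_3$ are correct (although the identity you quote, $y-\hat x_3=-e_3$, has the wrong sign for your convention $e_i=x_i-\hat x_i$; since $y=x_3$ one has $y-\hat x_3=+e_3$, and it is this correct identity that your displayed equations actually use). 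The computation $\dot V=-k_ee_3^2$ checks out, and Barbalat then gives $e_3\to0$ under boundedness of $x_3,u_1,u_2$, which you correctly flag as an extra (physically reasonable but unstated) assumption.

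The genuine gap is the last step, and you have identified it yourself rather than closed it: promoting $e_3\to0$ to $(e_1,e_2)\to0$. Your invariance-style sketch (on $e_3\equiv0$ the pair $(e_1,e_2)$ rotates with conserved norm, $\dot e_3\equiv0$ forces $u_2e_1-u_1e_2\equiv0$, and differentiating this constraint together with $x_3\not\equiv0$, $(u_1,u_2)\not\equiv0$ forces $e_1=e_2=0$) is the right algebra, but it is only a pointwise observability argument. For the non-autonomous error system it does not by itself yield convergence: the hypotheses ``$y\not\equiv0$'' and ``$(u_1,u_2)^T\not\equiv\mathbf{0}$'' exclude only identically vanishing signals, whereas a Barbalat/Matrosov or LaSalle--Yoshizawa closure needs a uniform (persistency-of-excitation type) condition on $x_3$ and $(u_1,u_2)$; a signal that decays to zero satisfies the stated hypotheses but defeats the argument. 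So as written your proposal establishes boundedness of the error and $e_3\to0$, but the convergence of $(\hat x_1,\hat x_2)$ --- which is the actual content of the proposition, since $\mathbf{z}_C=(x_1,x_2)$ is the unknown part of the state --- remains a plan rather than a proof. To complete it you would either need to strengthen the excitation hypothesis explicitly (as is implicitly done by the control laws of Propositions \ref{prop:bounded_nonvanishing_omega}, which keep $\omega$ bounded away from zero except on a measure-zero set of initial conditions) or invoke the detailed argument of \cite{2015b-FraPetRiz}.
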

\begin{proof}
Given in~\cite{2015b-FraPetRiz}.
\end{proof}

Thanks to Proposition~\ref{prop:observer} we can state the following result:
\begin{result}\label{res:z_C_estim}
The relative position of the CoM w.r.t. the center of the contact points, i.e., $\mathbf{z}_C(t)$, is distributively computed by using the observer~\eqref{eq:cm_observer_eq} and thanks to the local knowledge of $n$, $J$, $\omega$, $\mathbf{f}_{\text{\rm mean}}$, and $\sum_{i=1}^n {\mathbf{z}_i^\perp}^\top\Delta \mathbf{f}_i$ from the previous results.
\end{result}

The estimation of $\mathbf{z}_C(t)$ described beforehand is schematized in blocks $9,10,11$ (dynamic consensus algorithms) and $12$ (observer introduced in~\eqref{eq:cm_observer_eq}) in the diagram of Fig.~\ref{fig:AlgoOverview}. 

The inputs $u_{1,2,3}$ of the observer arise from a dynamic consensus phase. As \rev{deeply explained in Sec.~\ref{sec:cons_conv}} already stated, dynamic consensus algorithms converge asymptotically, thus, only the convergence to a ball centered in the average of the values of nodes can be guaranteed in finite time, in dependence of the convergence rate~\cite{2013-KiaCorMar}.
For this reason, it is important to analyze what happens to the observer's state when an additive input disturbance is present. To this aim, we define $\widetilde{u}_i = u_i + \varepsilon_i $, with $i=1,2,3$ and analyze the dynamics of $\mathbf{e} = \mathbf{x}-\hat{\mathbf{x}}$
\begin{equation}
\label{eq:error_observer_eq}
\left\{
\begin{aligned}
\dot {e}_1 &= -e_2 x_3 + \widetilde{u}_2 e_3\\
\dot {e}_2 &= e_1 x_3 - \widetilde{u}_1 e_3\\
\dot {e}_3 &= e_1 \widetilde{u}_2 - e_2 \widetilde{u}_1 - k_e e_3 - \varepsilon_3
\end{aligned}
\right. .
\end{equation}
{Defining the class $\mathcal{K}$ function\footnote{{According to Definition 4.2,~\cite{2002-Kha}, a continuous function $\alpha :[0,a)\rightarrow [0,\infty )$ is said to belong to class $\mathcal {K}$ if it is strictly increasing and $\alpha(0)=0$.}} $\gamma(r) = \frac{r}{k_e (\theta - 1)}$, where $0<\theta<1$ and $k_e>0$, the following result holds.}
\begin{prop}\label{prop:iss}
The error system of~\eqref{eq:cm_observer_eq}, i.e.,~\eqref{eq:error_observer_eq}, is Input-to-State Stable (ISS) with $\gamma(r) = \frac{r}{k_e (\theta - 1)}$, where $0<\theta<1$ and $k_e>0$, according to {Definition 4.7,~\cite{2002-Kha}}. 
\end{prop}
\begin{proof}
{Provided in the technical report associated with this paper, which can be downloaded at: \url{https://arxiv.org/abs/1602.01891}}
\end{proof}

\subsection{Estimation of $\mathbf{v}_C$}\label{sec:est_v_C(t)}

The velocity of the center of mass $\mathbf{v}_C(t)$ is estimated locally by each agent~$i$ using the rigid body constraint
$\frac{\rm d}{\rm dt}(\mathbf{p}_{C} - \mathbf{p}_{C_i}) = \omega(\mathbf{p}_{C} - \mathbf{p}_{C_i})^{\perp}$,
which can be rewritten as
\begin{align}
\mathbf{v}_{C}(t) &= \mathbf{v}_{C_i}(t) - \omega(t)(\mathbf{z}_C(t) + \mathbf{z}_i(t))^{\perp},
\label{eq:velocity_com}
\end{align}
whose right-hand-side elements are all known since:
\begin{itemize}
\item $\mathbf{v}_{C_i}(t)$ is locally measured by agent $i$
\item $\omega(t)$, $\mathbf{z}_C(t)$, and $\mathbf{z}_i(t)$ are known by each agent $i$ thanks to Results~\ref{res:w_estim},~\ref{res:z_C_estim}, and~\ref{res:z_i}, respectively.
\end{itemize}

\begin{result}\label{res:v_C_estim}
The CoM velocity $\mathbf{v}_C(t)$ is distributively computed using~\eqref{eq:velocity_com} and the knowledge of $\mathbf{v}_{C_i}(t)$, $\omega(t)$, $\mathbf{z}_C(t)$, $\mathbf{z}_i(t)$.
\end{result}
Block $13$ in the diagram of Fig.~\ref{fig:AlgoOverview} represents this part. 

\subsection{Estimation of the mass $m$}\label{sec:est_m}

 The estimation of the mass $m$ is a straightforward consequence of the estimation of the $\mathbf{v}_{C_i}(t)$ and average force. In fact, rewriting~\eqref{eq:dynamics_1} as %
$\dot{\mathbf{v}}_C = \frac{1}{m}\sum_{i=1}^n \mathbf{f}_i = \frac{n}{m}\mathbf{f}_{\rm mean}$, 
we obtain
\begin{align}
\dot{\mathbf{v}}_C = m^{-1}\; n\,\mathbf{f}_{\rm mean},
\label{eq:lls_mass}
\end{align}
where
\begin{inparaenum}[\it i)]
\item $n$ is known,
\item $\mathbf{f}_{\text{\rm mean}}$ is distributively estimated from $\mathbf{f}_i$ using dynamic consensus (as in Result~\ref{res:recast_omega_dyn}), and
\item $\mathbf{v}_C$ is known locally by each agent $i$ thanks to Result~\ref{res:v_C_estim}.
\end{inparaenum}
Thus, the problem is recast as the linear least square estimation problem~\eqref{eq:lls_mass} that can be solved resorting to the same strategy used to estimate $d_{ij}$ in~\eqref{eq:diff_eq:z_ij_decomp} and $J$ in~\eqref{eq:J_lls_problem}.

\begin{result}\label{res:m_estim}
The mass $m$ is distributively computed from the knowledge of $\mathbf{v}$ and $n$, and $\mathbf{f}_{\rm mean}$ by solving an online linear least square problem via filtering~\eqref{eq:lls_mass}.
\end{result}

Block $14$ in the diagram of Fig.~\ref{fig:AlgoOverview} represents this part.

\section{Inertia and Mass Changing during the Task}\label{sec:TimeVarInertParams}
In some particular cases, it might happen that the values of $J$ and $m$ change during the manipulation because, e.g., an object which was part of the load is dropped or, viceversa, a new object is added to the load. Such changes cause discrete jumps at certain instants in the values of $J$ and $m$, which could be estimated again using the methods proposed in Sections~\ref{sec:est_J} and~\ref{sec:est_m}. However, in order to do so the robots would need to detect that $J$ and $m$ have changed and to trigger again the estimation algorithms. Furthermore, the estimation of $J$ proposed in~\ref{sec:est_J} requires that each agent applies a specific force $\mathbf{f}_i=k_z\mathbf{z}_i^\perp$. However, in some cases it might be inconvenient to temporarily pause the manipulation and apply such forces for estimating again $J$.

In order to deal with such possibilities we propose here
a variant of the first estimation algorithm which is based on two new estimators, see Fig.~\ref{fig:AlgoOverviewNew}. The first is an alternative observer that extends~\eqref{eq:cm_observer_eq} including $J$ in the state vector. Such estimation method can run in the background during the manipulation, thus overcoming all the aforementioned possible pitfalls.
The second is a new observer for $m$ that can also run in the background and does not need any trigger or special coordination among agents. Both observers are derived next.

\begin{figure}[t]
\centering
\includegraphics[width=.95\columnwidth]{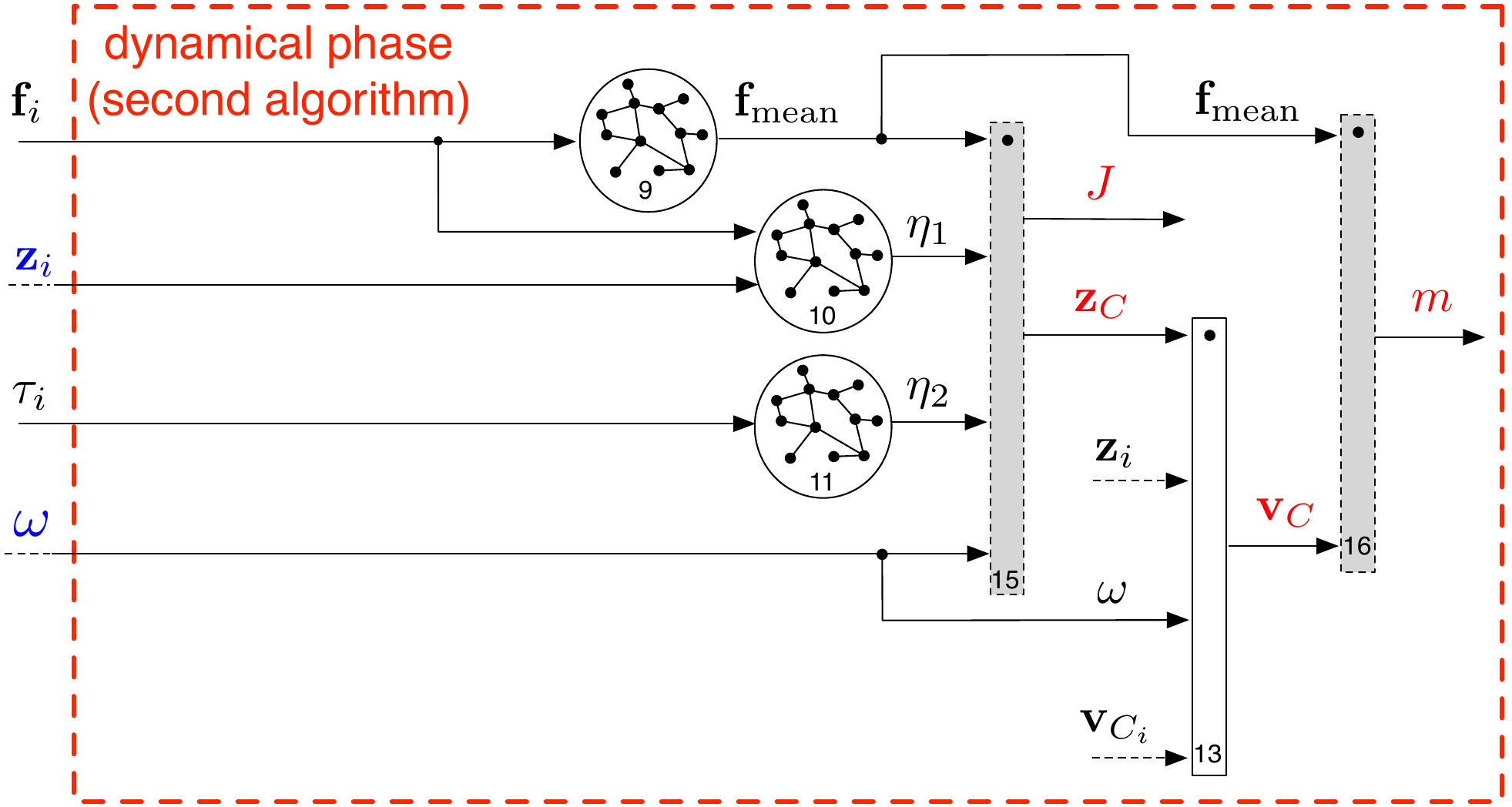}
\caption{Overview of the dynamical phase for the second algorithm. Compared to the dynamical part of the first algorithm (Fig.~\ref{fig:AlgoOverview}) the consensus $7$ and the step estimator $8$ blocks are removed; while block $12$ is replaced by the new estimator~\eqref{eq:cm_observer_eq_tvar} (new gray block $15$); and  block $14$ is replaced by the new estimator~\eqref{eq:m_observer_eq} (new gray block $16$). 
}
\label{fig:AlgoOverviewNew}
\end{figure}

\smallskip
Assuming that $J$ is not known, Result~\ref{res:recast_omega_dyn} can be recast as:
\begin{result}\label{res:recast_omega_dyn_J}
The rotational dynamics is given by 
\begin{align}
\dot \omega = J^{-1} {\mathbf{z}_C^\perp}^\top \tilde{\mathbf{f}} + J^{-1}\tilde{\eta}, \label{eq:omega_dyn_J_z_C}
\end{align}
where $\omega(t)$ is known thanks to Result~\ref{res:w_estim}, and $\tilde{\mathbf{f}}=n\mathbf{f}_{\rm mean}$ and $\tilde\eta=J\eta_1+J\eta_2 = \sum_{i=1}^n {\mathbf{z}_i^\perp}^\top \mathbf{f}_i + \sum_{i=1}^n \tau_i$ are locally known to each agent through distributed computation.
\end{result}

Combining~\eqref{eq:omega_dyn_J_z_C} and~\eqref{eq:zC_dyn}, and knowing the fact that $\dot{J}=0$ at any time except in the few isolated instants in which the unknown load inertia undergoes discrete, step-like  changes, we obtain an extended version of~\eqref{eq:nonlinear_sys_zC}
\begin{align}
\left\{
\begin{aligned}
\dot x_{1} &= -x_{2}x_{3} \\
\dot x_{2} &= x_{1}x_{3}\\
\dot x_{3} & = x_{4}x_{1}u_2-x_{4}x_{2}u_1 + x_{4}u_3\\
\dot x_{4} &= 0
\end{aligned}
\right., \quad\;\; y = x_3,
\label{eq:nonlinear_sys_J_zC}
\end{align}
where $(x_{1}\;x_{2})^\top=\mathbf{z}_{C}=(z_{C}^x\;z_{C}^y)^\top$ and $x_4=J^{-1}$ are the unknown parts of the state vector, $x_{3}=\omega$ is its measured part and, consequently, can be considered as the system output, and $(u_1 \; u_2)^\top=\tilde{\textbf{f}} $, 
 $u_3=\tilde\eta $ are known inputs.

 \begin{prop}\label{prop:observer_J}
Consider the following dynamical system
\begin{equation}
\label{eq:cm_observer_eq_tvar}
\begin{aligned}
\dot {\hat x}_1 &= -\hat{x}_2 x_3 + u_2(y - \hat{x}_3)\\
\dot {\hat x}_2 &= \hat x_1 x_3 - u_1(y - \hat{x}_3)\\
\dot {\hat x}_3 &= \hat x_1 u_2 - \hat x_2 u_1 + \hat x_4 u_3 + k_e(y-{\hat x}_3)\\
\dot {\hat x}_4 &= -u_3(y - \hat{x}_3),
\end{aligned}
\end{equation}
where $k_e>0$.
If $y(t)\not\equiv 0$, $u_3(t)\not\equiv 0$ and $\left( u_1(t) \; u_2(t)\right)^\top \not\equiv \mathbf{0}$, then~\eqref{eq:cm_observer_eq_tvar} is an asymptotic observer of a modified version of~\eqref{eq:nonlinear_sys_J_zC}, in which the following change of variable is performed $x_1\rightarrow x_1x_4$, $x_2\rightarrow x_2x_4$. Thus, defining $\hat{\mathbf{x}}= ({\hat x}_1 \ {\hat x}_2 \ {\hat x}_3 \ {\hat x}_4)^\top$ and $\mathbf{x}= ( x_1x_4 \ \ x_2x_4 \ \ x_3 \ \ x_4)^\top$, one has that $\hat{\mathbf{x}}(t) \rightarrow \mathbf{x}(t)$ asymptotically, which in turn implies that $({\hat x}_1/{\hat x}_4 \ \ {\hat x}_2/{\hat x}_4)\rightarrow (x_1 \ \ x_2)$ asymptotically.
\end{prop}

\begin{proof}
{Provided in the technical report associated with this paper, which can be downloaded at: \url{https://arxiv.org/abs/1602.01891}}
\end{proof}

In the ideal case, the estimates carried out by all the robots are identical.
In the case of noisy measurements, the noise can be averaged out by means of a dynamic consensus algorithm~\cite{2013-KiaCorMar}, as done for the estimation of $\omega$.

\smallskip
In order to observe $m$, we design an observer for the following nonlinear dynamical system
\begin{align}
\left\{
\begin{aligned}
\dot{z}_{1} &= z_{2}u \\
\dot z_{2} &= 0
\end{aligned}
\right., \quad \; y = z_1  ,
\label{eq:nonlinear_sys_m_bis}
\end{align}
which is easily derived from~\eqref{eq:lls_mass} by defining $z_1 = v_{C,x} + v_{C,y}$, $z_2 = m^{-1}$, $u = f_{{\rm mean},x} + f_{{\rm mean},y}$, and by imposing $\dot m = 0$ for the same reasons for which it was previously assumed $\dot J = 0$.

 \begin{prop}\label{prop:observer_m}
Consider the following dynamical system
\begin{equation}
\label{eq:m_observer_eq}
\begin{aligned}
\dot{\hat{z}}_1 &= \hat{z}_2 u + k_1(y - \hat{z}_1)\\
\dot {\hat z}_2 &= k_2(y - \hat{z}_1),
\end{aligned}
\end{equation}
where $k_1, k_2 \in \mathbb{R}$. If 
 $k_1 > 0$,
 $k_2u > 0$, and
$k_1^2 > 4k_2u$
hold, then~\eqref{eq:m_observer_eq} is an asymptotic observer of~\eqref{eq:nonlinear_sys_m_bis}.
\end{prop}
\begin{proof}
{Provided in the technical report associated with this paper, which can be downloaded at: \url{https://arxiv.org/abs/1602.01891}}
\end{proof}

The same observation made for the estimation of a time-varying $J$ is valid in this case: in the ideal case of noiseless measurements, the estimates carried out by all the robots are identical. In the case of noisy measurements, the noise can be averaged out by means of a dynamic consensus algorithm~\cite{2013-KiaCorMar}.

\section{Decentralized Observability-based Control}
\label{sec:SimpleLocalRules}

In the previous sections, we have shown how to solve the Problem in Sec.~\ref{sec:probStat}. Apart from the phase of the first-algorithm  in which $J$ is estimated (Sec.~\ref{sec:est_J}), in all the other phases we did not suggest any control input to move the load and perform the estimation. The user of the algorithm is free to use any control input, as long as it ensures the observability conditions, i.e., non-zero angular rate $\omega$ and non-zero average force $\mathbf{f}_{\rm mean}$. In each phase, either one or both of the two conditions are needed to ensure a convergent estimation. 

In the following, we prove that an extremely basic control strategy satisfies, under very mild conditions, the aforementioned observability requirements. Furthermore, this control strategy: %
\begin{inparaenum}[\it (i)]
\item can be implemented relying only on local perception and communication (it is, therefore, distributed); and 
\item does not require the knowledge of parameters and quantities that are the objectives of the distributed estimation (it is estimation-`agnostic'). Hence, it can be applied during the estimation process and independently from it.
\end{inparaenum}

\begin{prop} \label{prop:bounded_nonvanishing_omega}
Assume that the following local control rule is used: $\mathbf{f}_i=\mathbf{f}^*= {\rm const}$, $\tau_i=0$, $\forall i=1 \ldots n$, and denote with $\omega_0$ the rotational rate at $t=0$, then
\begin{enumerate}
\item $\omega(t)$ remains bounded, in particular:
\begin{align}
|\omega(t)| \le \sqrt{\omega_0^2 + 4nJ^{-1} \|\mathbf{f}^*\| \|\mathbf{z}_C\|} \quad \forall t\geq 0\label{eq:omega_bound}
\end{align}
\item $\exists {\bar t} \geq 0$ such that $\omega$ becomes $\omega\equiv0$ $\forall t\ge {\bar t}$, if and only if the following condition hold
\begin{align}
2 n \, \textbf{z}_C(0)^\top\textbf{f}^* - J\omega^2(0) = 2 n \, \|\textbf{z}_C\|\|\mathbf{f}^*\|.		 
		 \label{eq:critical_init_cond}
\end{align}
\end{enumerate}
Thus, the proposed control law is suitable for the estimation process apart from a zero-measure case implied by~\eqref{eq:critical_init_cond}.
\label{PropositionVII2}
\end{prop}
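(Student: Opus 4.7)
The plan is to reduce the closed-loop dynamics to a conservative planar oscillator in $(\omega,\mathbf{z}_C)$ and to extract both claims from a single conserved ``energy''. Substituting $\mathbf{f}_i=\mathbf{f}^*$ and $\tau_i=0$ into the rotational dynamics~\eqref{eq:rotational_dynamics}, and using $\sum_{i=1}^n\mathbf{z}_i=\mathbf{0}$ (from the definition of the geometric centre $G$) to kill the first sum, one obtains the reduced equation
\begin{align*}
\dot\omega \,=\, \frac{n}{J}\,(\mathbf{z}_C^\perp)^T\mathbf{f}^*,
\end{align*}
which, together with the rigid-body identity $\dot{\mathbf{z}}_C=\omega\,\mathbf{z}_C^\perp$ from~\eqref{eq:zC_dyn}, forms a closed autonomous ODE in $(\omega,\mathbf{z}_C)\in\mathbb{R}\times\mathbb{R}^2$ whose trajectories preserve $\|\mathbf{z}_C\|$.

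The key step is to introduce the conserved quantity
\begin{align*}
E(t) \,=\, \tfrac{1}{2}\,J\,\omega(t)^2 \,-\, n\,\mathbf{z}_C(t)^T\mathbf{f}^*,
\end{align*}
and to verify $\dot E = J\omega\dot\omega - n(\omega\mathbf{z}_C^\perp)^T\mathbf{f}^* = 0$ by direct substitution of the two evolution equations. Setting $E(t)=E(0)$ and isolating $\omega(t)^2$ yields
\begin{align*}
\omega(t)^2 \,=\, \omega_0^2 + \tfrac{2n}{J}\bigl(\mathbf{z}_C(t)-\mathbf{z}_C(0)\bigr)^T\mathbf{f}^*.
\end{align*}
Applying Cauchy--Schwarz to each term of $(\mathbf{z}_C(t)-\mathbf{z}_C(0))^T\mathbf{f}^*$ and using the constancy of $\|\mathbf{z}_C\|$ bounds the parenthesis above by $2\|\mathbf{z}_C\|\|\mathbf{f}^*\|$; taking square roots then produces exactly~\eqref{eq:omega_bound}, proving item 1.

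For item 2, I argue both directions from the resulting pendulum-like structure. If $\omega\equiv 0$ on $[\bar t,\infty)$, then on that interval $\dot{\mathbf{z}}_C=\mathbf{0}$ forces $\mathbf{z}_C\equiv\mathbf{z}_C^\ast$, and $\dot\omega = 0$ then forces $\mathbf{z}_C^\ast\parallel\mathbf{f}^*$; so the trajectory occupies an equilibrium of the reduced system, and by uniqueness of solutions of the autonomous analytic ODE the same constant state must hold for every earlier time, giving $\omega_0=0$ and $\mathbf{z}_C(0)=\mathbf{z}_C^\ast$. Substituting the ``aligned'' equilibrium $\mathbf{z}_C(0)=\|\mathbf{z}_C\|\,\mathbf{f}^*/\|\mathbf{f}^*\|$ into the left-hand side of~\eqref{eq:critical_init_cond} recovers the right-hand side. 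Conversely, rewrite~\eqref{eq:critical_init_cond} as $\mathbf{z}_C(0)^T\mathbf{f}^* = \|\mathbf{z}_C\|\|\mathbf{f}^*\| + J\omega_0^2/(2n)$; the left-hand side is bounded above by $\|\mathbf{z}_C\|\|\mathbf{f}^*\|$ by Cauchy--Schwarz, while the right-hand side is bounded below by the same quantity since $\omega_0^2\ge 0$. Equality therefore simultaneously forces $\omega_0=0$ and $\mathbf{z}_C(0)=\|\mathbf{z}_C\|\mathbf{f}^*/\|\mathbf{f}^*\|$, which is the stable equilibrium of the reduced system, so $\omega(t)\equiv 0$ for all $t\ge 0$ and $\bar t=0$ works. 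The most delicate step is the forward implication, where ODE uniqueness must be invoked to preclude a finite-time rendezvous with an equilibrium starting from a non-equilibrium initial condition; once that is granted, the sign chosen in~\eqref{eq:critical_init_cond} automatically selects the aligned equilibrium (the anti-aligned case would correspond to the reflected condition and is implicitly outside the scope of the statement).
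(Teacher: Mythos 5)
Your proof of item 1 is essentially the paper's: your conserved quantity $E=\tfrac{1}{2}J\omega^2 - n\,\mathbf{z}_C^T\mathbf{f}^*$ is just $J/2$ times the invariant $\alpha=\omega^2-2nJ^{-1}\mathbf{z}_C^T\mathbf{f}^*$ the paper uses, and the subsequent chain (conservation, Cauchy--Schwarz, constancy of $\|\mathbf{z}_C\|$) is identical. For item 2 you take a genuinely different route. The paper evaluates the invariant at $\bar t$ (its \eqref{eq:equal_omega} with $\omega(\bar t)=0$), separately imposes $\dot\omega(\bar t)=0$ in the rotational dynamics to obtain $(\mathbf{z}_C^\perp(\bar t))^T\mathbf{f}^*=0$, substitutes, and then notes that all higher derivatives of $\omega$ vanish at $\bar t$; this establishes necessity of \eqref{eq:critical_init_cond} but leaves the sufficiency direction essentially implicit. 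You instead observe that ``$\omega\equiv0$ on a ray'' forces the state at $\bar t$ to be an equilibrium of the reduced pendulum-like $(\omega,\mathbf{z}_C)$ system, propagate that equilibrium back to $t=0$ by uniqueness of solutions, and prove the converse by the double inequality $\mathbf{z}_C(0)^T\mathbf{f}^*\le\|\mathbf{z}_C\|\|\mathbf{f}^*\|\le\|\mathbf{z}_C\|\|\mathbf{f}^*\|+J\omega_0^2/(2n)$, which pins the initial condition to the aligned equilibrium and gives $\bar t=0$. This buys a complete two-sided argument and makes explicit that \eqref{eq:critical_init_cond} describes a single point of the reduced phase space rather than a hypersurface. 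One caveat you share with the paper, and which you at least flag honestly: the anti-aligned equilibrium ($\omega_0=0$, $\mathbf{z}_C(0)^T\mathbf{f}^*=-\|\mathbf{z}_C\|\|\mathbf{f}^*\|$) also yields $\omega\equiv0$ for all $t$ yet violates \eqref{eq:critical_init_cond}, so the ``only if'' direction, read literally, fails there; the paper buries this in the unjustified positive-sign choice of its \eqref{eq:z_C_f_start_prodoct}, whereas you name the issue and set it aside. Strictly speaking that is a defect of the statement, not of either proof.
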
 
\begin{proof}
{Provided in the technical report associated with this paper, which can be downloaded at: \url{https://arxiv.org/abs/1602.01891}}
\end{proof}

The use of the local control action in Proposition~\ref{prop:bounded_nonvanishing_omega} ensures the sought observability conditions under the very mild conditions specified in~\eqref{eq:critical_init_cond}. However, it causes the load CoM velocity to grow linearly over time (see, e.g.,~\eqref{eq:lls_mass}). Therefore, it is wise to modify the proposed control action by periodically changing the direction of the common force (i.e., switching between $\mathbf{f}^*$ and $-\mathbf{f}^*$ on a periodical basis). In this way, the CoM velocity will boundedly oscillate around zero.

It is also important to define a control strategy that is able to stop the load motion if needed (like, e.g., at the end of all the estimation phases). This is provided in the following. %

\begin{prop} \label{prop:stopping_motion}
Assume that the following local control rule is used: $\mathbf{f}_i=-b \mathbf{v}_{C_i}$, $\tau_i=0$, $\forall i=1 \ldots n$, with $b>0$. Then, both $\omega$ and $\mathbf{v}_C$ converge asymptotically to zero with a convergence rate that is proportional to $b$.

\end{prop}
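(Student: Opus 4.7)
The plan is to use a kinetic-energy Lyapunov argument and show that, under this control law, the closed-loop dissipation is a positive-definite quadratic form in the twist $\boldsymbol{\nu}=(\mathbf{v}_C^T\;\omega)^T$, so that Gr\"onwall's inequality yields exponential decay at a rate proportional to $b$. First, I would express each contact-point velocity via rigid-body kinematics as $\mathbf{v}_{C_i}=\mathbf{v}_C+\omega(\mathbf{z}_i+\mathbf{z}_C)^\perp$; substituting $\mathbf{f}_i=-b\mathbf{v}_{C_i}$ and $\tau_i=0$ into the load dynamics~\eqref{eq:dynamics_rewritten} makes both the net force and the net torque linear in $(\mathbf{v}_C,\omega)$.

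Next, I would consider the candidate $V=\tfrac12 m\|\mathbf{v}_C\|^2+\tfrac12 J\omega^2$. Differentiating along trajectories and using the dynamics gives the clean power-dissipation identity
\begin{align*}
\dot V \;=\; \mathbf{v}_C^T\!\sum_{i=1}^n\mathbf{f}_i+\omega\sum_{i=1}^n(\mathbf{z}_i+\mathbf{z}_C)^{\perp T}\mathbf{f}_i \;=\; \sum_{i=1}^n\mathbf{v}_{C_i}^T\mathbf{f}_i \;=\; -b\sum_{i=1}^n\|\mathbf{v}_{C_i}\|^2,
\end{align*}
where the intermediate reorganization only uses the definition of $\mathbf{v}_{C_i}$ and the identity $\sum_i\mathbf{z}_i=\mathbf{0}$ characterizing the geometric center $G$. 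This already proves monotone decrease of $V$.

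The critical step is showing that $Q(\mathbf{v}_C,\omega):=\sum_i\|\mathbf{v}_{C_i}\|^2$ is positive-definite. Expanding,
\begin{align*}
Q \;=\; n\|\mathbf{v}_C\|^2+2n\,\omega\,\mathbf{v}_C^T\mathbf{z}_C^\perp+\omega^2\sum_{i=1}^n\|\mathbf{z}_i+\mathbf{z}_C\|^2,
\end{align*}
and the identity $\sum_i\mathbf{z}_i=\mathbf{0}$ gives the parallel-axis-style relation $\sum_i\|\mathbf{z}_i+\mathbf{z}_C\|^2=\sum_i\|\mathbf{z}_i\|^2+n\|\mathbf{z}_C\|^2$. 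Writing the associated $3\times3$ matrix of $Q$ and taking its Schur complement, I obtain exactly $\sum_i\|\mathbf{z}_i\|^2$, which is strictly positive since contact points are distinct (hence at least one $\mathbf{z}_i\neq\mathbf{0}$). Because $\|\mathbf{z}_i\|$ and $\|\mathbf{z}_C\|$ are body-frame invariants, this lower bound is time-independent: there exist constants $0<c\le C$ with $c(\|\mathbf{v}_C\|^2+\omega^2)\le V\le C(\|\mathbf{v}_C\|^2+\omega^2)$ and $Q\ge c(\|\mathbf{v}_C\|^2+\omega^2)$. Combining, $\dot V\le-(bc/C)V$, and Gr\"onwall gives $V(t)\le V(0)e^{-(bc/C)t}$, so both $\mathbf{v}_C$ and $\omega$ go to zero exponentially with rate proportional to $b$.

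The main obstacle I expect is controlling the cross term $2n\omega\mathbf{v}_C^T\mathbf{z}_C^\perp$ in $Q$: a priori a large $\|\mathbf{z}_C\|$ could make one worry that the cross term dominates the diagonal and destroys positive-definiteness. The parallel-axis identity is precisely the tool that cancels this threat, pinning the Schur complement to the constant $\sum_i\|\mathbf{z}_i\|^2>0$ independently of $\mathbf{z}_C$. Once this is in hand, the rest of the argument is a textbook Lyapunov exponential-stability estimate.
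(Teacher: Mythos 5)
Your proof is correct, and it lands on exactly the same Lyapunov function $V=\tfrac12\bigl(m\|\mathbf{v}_C\|^2+J\omega^2\bigr)$ and the same final dissipation expression as the paper: your $Q=\sum_i\|\mathbf{v}_{C_i}\|^2=n\|\mathbf{v}_C+\omega\mathbf{z}_C^\perp\|^2+\omega^2\sum_i\|\mathbf{z}_i\|^2$ is precisely the bracket the paper obtains for $-\dot V/b$. The route differs in two respects. First, you reach $\dot V=-b\sum_i\|\mathbf{v}_{C_i}\|^2$ in one line via the rigid-body power balance $\dot V=\sum_i\mathbf{v}_{C_i}^T\mathbf{f}_i$, whereas the paper first computes the closed-loop $n\mathbf{f}_{\rm mean}$ and $\eta$ explicitly, substitutes them into the reduced dynamics \eqref{eq:omega_dyn_z_C} and \eqref{eq:lls_mass}, and only then differentiates $V$; your derivation is shorter and makes the physical dissipation mechanism transparent. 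Second, you go further than the paper on the quantitative claim: the paper stops at $\dot V<0$ for $[\mathbf{v}_C^T\,\omega]\neq\mathbf{0}$ and declares the thesis, while you establish positive-definiteness of $Q$ via the parallel-axis identity and a Schur complement (pinning it to the time-invariant constant $\sum_i\|\mathbf{z}_i\|^2>0$), and then close with Gr\"onwall to get $V(t)\le V(0)e^{-(bc/C)t}$ — which is what actually justifies the stated ``convergence rate proportional to $b$.'' One shared caveat: the positivity of $\sum_i\|\mathbf{z}_i\|^2$ (and hence the whole argument) requires $n\ge 2$, an assumption both you and the paper leave implicit.
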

\begin{proof}
{Provided in the technical report associated with this paper, which can be downloaded at: \url{https://arxiv.org/abs/1602.01891}}
\end{proof}

\section{Numerical Results}\label{sec:numRes}
This section and the Appendix show numerical results that validate our approach. 
First, we demonstrate the working principles of the first algorithm simulating a network of $n=10$ agents manipulating an unknown load on a plane. Then, the validity of the first algorithm is extensively assessed through a detailed simulation campaign in a wide range of operational conditions. Finally we show some simulations for the second algorithm using  time-varying mass and inertia observers.
\subsection{Manipulation of an unknown load by a team of 10 agents}
\label{sec:simulations_A}
We consider a planar load with $m=50$\,kg and $J=86.89$\,kg\,m$^{2}$, manipulated by a team of $n=10$ agents communicating over a line-topology network. Such a topology is the worst case for the algorithm convergence rate, which is an increasing function of the network diameter~\cite{2011-OlsTsi}. %
The velocity measurements are affected by an additive zero-mean Gaussian noise with covariance matrix $\boldsymbol\Sigma_{i} = \sigma^{2} \mathbf{I}_{2\times 2}$, and $\sigma = 0.3$ m/s. 
The quantities involved in the execution and assessment of the first algorithm are illustrated in Fig.~\ref{fig:estimation}.
As a first step, starting from $t_0=0$, each agent applies an arbitrary force and executes the procedure described in Sec.~\ref{sec:est_z_ij} to estimate the relative distances between contact points. We observe that the presence of noise in the velocity measurements can make the signal-to-noise ratio too small to make an effective use of acquired measurement. In this case, we opt for keeping the last valid measurement as constant, and not to update the measurement with quantities that are too noisy to be useful. 
The signal-to-noise ratio threshold in order to consider the  measured velocity is set to $\|\dot{\mathbf{z}}_{ij}\| \leq 0.5$ m/s. 
The first plot of Fig.~\ref{fig:estimation} reports the convergence to zero of the Estimation Error Relative Distance (EERD) index,  defined as EERD$(t)=\sum_{i=1}^{n-1} \sum_{j=i}^n \mathcal{A}(i,j) \|(\mathbf{z}_{ij}(t)-\hat{\mathbf{z}}_{ij}(t))\|_2$, where variable $\hat{\mathbf{z}}_{ij}$ indicates the estimate of $\mathbf{z}_{ij}$. Consistently, here and henceforth, the estimate of a quantity $\star$ is indicated with a superimposed hat, i.e., $\hat{\star}$. %
Dynamic consensus blocks have been implemented by means of the Fist Order Input Dynamic Consensus Filter~\cite{2013-KiaCorMar} with parameter values set as $\epsilon = 0.01$ and $\beta = 10$. Static average consensus detection is detected by means of the method presented in~\cite{2007-YadSal}.%
\begin{figure}[t]
\centering
\includegraphics[width=0.45\textwidth]{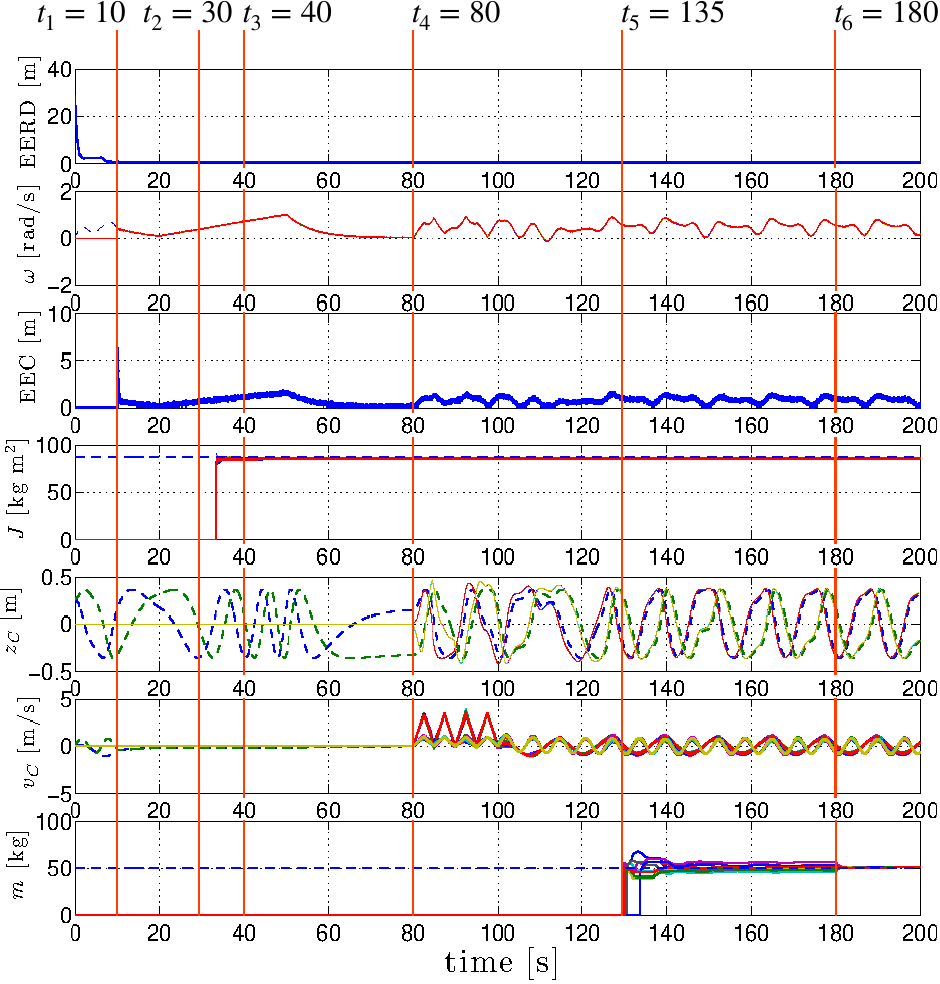}
\caption{An illustrative simulation of the whole  estimation algorithm described in Secs.~\ref{sec:kyn_phase} and~\ref{sec:dyn_phase}. From top to bottom, respectively: the trend of the EERD index, the estimates of $\omega$, the trend of the EEC index, the estimates of $J$, the observations of $\mathbf{z}_C$, the estimates of $\mathbf{v}_C$, and the estimates of $m$. Dashed lines indicate true values, while solid lines indicate estimates.}
\label{fig:estimation}
\end{figure}
Starting from  $t_1=10$\,s, each agent applies the control rules given in Propositions~\ref{prop:bounded_nonvanishing_omega}~and~\ref{prop:stopping_motion}, which guarantee both the observability and boundedness of $\left[\mathbf{v}_C^\top \,\, \omega\right]^\top$. At $t_1=10$\,s, %
  $\omega$ and $\mathbf{z}_i$ start to be estimated, as described in Sec.~\ref{sec:est_z_i_omega}. The second and third plots of Fig.~\ref{fig:estimation} illustrate, respectively, the trend of the angular velocity $\omega$ and its estimate $\hat{\omega}$, and the quadratic performance index on the estimation of the relative distance to the center of mass, i.e., EEC$(t)=\sum_{i=1}^{n} \|(\mathbf{z}_{i}(t)-\hat{\mathbf{z}}_{i}(t))\|_2$. 
At $t=20$\,s, the first step of the dynamical phase is executed, as described in Sec.~\ref{sec:est_J}. First, each agent runs an average consensus in order to locally estimate the constant value $k_z{\sum_{i=1}^n \Vert \mathbf{z}_i\Vert^2}$. Such consensus will theoretically converge
asymptotically. %
However, we use the technique presented in~\cite{2007-YadSal} to assess a suitable stopping condition. Specifically, we distributively determine when the consensus has been reached within a given error bound, which in our experiment is set to $0.001$\,m.%
 \rev{of practically converge that can be distributively
detected with several techniques~\cite{2016-KenKenSko}~\cite{2011-OlsTsi}.}
At $t_3=30$\,s each agent $i$ runs  a least square estimation of $J$ using also the knowledge of $\hat{\omega}$. 
Each agent checks the convergence of the least squares estimation evaluating the variance of the estimator~\cite{2005-Gee}. 
From $t_3=40$\,s, the local estimates are transmitted over the network and an average consensus is run to agree on a common estimate, which in our case is $\hat{J}=85.67$\,kg\,m$^2$ (fourth plot in Fig.~\ref{fig:estimation}). %
Also in this case, we use the technique presented in~\cite{2007-YadSal} with error bound set to $0.01$\,kg\,m$^2$. %
Then, the angular rate is brought to zero (Proposition~\ref{prop:stopping_motion}). 
Afterwards, at $t_4=80$\,s each agent starts
 the nonlinear observation of $\mathbf{z}_C$ described in Sec.~\ref{sec:est_z_C(t)}. The  observer errors reach zero at about $t_5=135$\,s, as illustrated in the fifth plot of Fig.~\ref{fig:estimation}. 
The estimate $\hat{\mathbf{v}}_C$ is then computed using~\eqref{eq:velocity_com} (sixth plot of Fig.~\ref{fig:estimation}), which in turn allows to compute $\hat{m}$, as explained in Sec.~\ref{sec:est_m}, by a preliminary collections of samples  and local least squares estimations. An average consensus phase, starting at $t_6=180$\,s, leads to an accurate estimate of $m$ at $t_f=200$\,s (seventh and last plot of Fig.~\ref{fig:estimation}). %
The same techniques previously described to detect consensus convergence~\cite{2007-YadSal} is applied here using a bound of $0.01$\,kg. %
The duration of the entire algorithm is $200$\,s, of which a large portion is needed  to collect samples to run the local least squares and the consensus algorithms for the constant parameters $m$, $J$ and $d_{ij}$. The duration of these phases depends on the noise level. Ideally, in the absence of noise, a single sample would be sufficient to perform the estimation, while in the real, noisy case, a trade-off between robustness~\cite{1991-SloLi_} and duration of the estimation phase is necessary.
Finally, also the convergence time of $\hat{\mathbf{z}}_C$ can be shortened by acting on the value of $k_e$ in~\eqref{eq:cm_observer_eq}, and the gains of the consensus algorithms can be tuned in order to speed up the agreement~\cite{2007-OlfFaxMur}.

Additional extensive results are given in the Appendix.

\section{Conclusions}\label{sec:concl}

In this paper, we propose two fully-distributed methods for the estimation of the parameters needed by a planar multi-agent system to collectively manipulate an unknown load, i.e., the kinematic and dynamic parameters, as well as the estimate of the kinematic state of the load, i.e., the velocity of the center of mass and its rotational rate.
The approaches are totally distributed and rely on the geometry of the rigid body kinematics, on the rigid body dynamics, on nonlinear observation theory, and on consensus strategies. They are based on a sequence of steps that leads to states in which all agent agrees on the estimated parameters, during such steps any motion control law can be used (apart from a single step in the first algorithm).
The only requirements are related to the communication network, which is only required to be connected, and to the capability of each agent to control the local force applied to the load, while measuring the velocity of the contact point. 
Extensive numerical simulations confirm the effectiveness of our approach and its robustness to measurement noise and system size. 
Future works will deal with the manipulation of 3D objects in the aereal domain, extending the aerial manipulation methods in~\cite{2014a-GioFraSalSchPra,2014b-GioRylPraBueFra,2014d-YueSecBueFra}.

\appendix

\subsection{Manipulation of an unknown load by a team of 10 agents}

Figures.~\ref{fig:estimation_fullyConn} and~\ref{fig:estimation_fullyConn_noNoise} illustrate the same simulation setup of Sec.~\ref{sec:simulations_A} in the case of a fully connected topology and two different levels of noise. Comparing the time sequence $t_1\ldots t_6$,  we observe that a more connected topology and a lower noise level are factors that sensibly reduce the execution time of each step.-

\begin{figure}[t]
\centering
\includegraphics[width=0.45\textwidth]{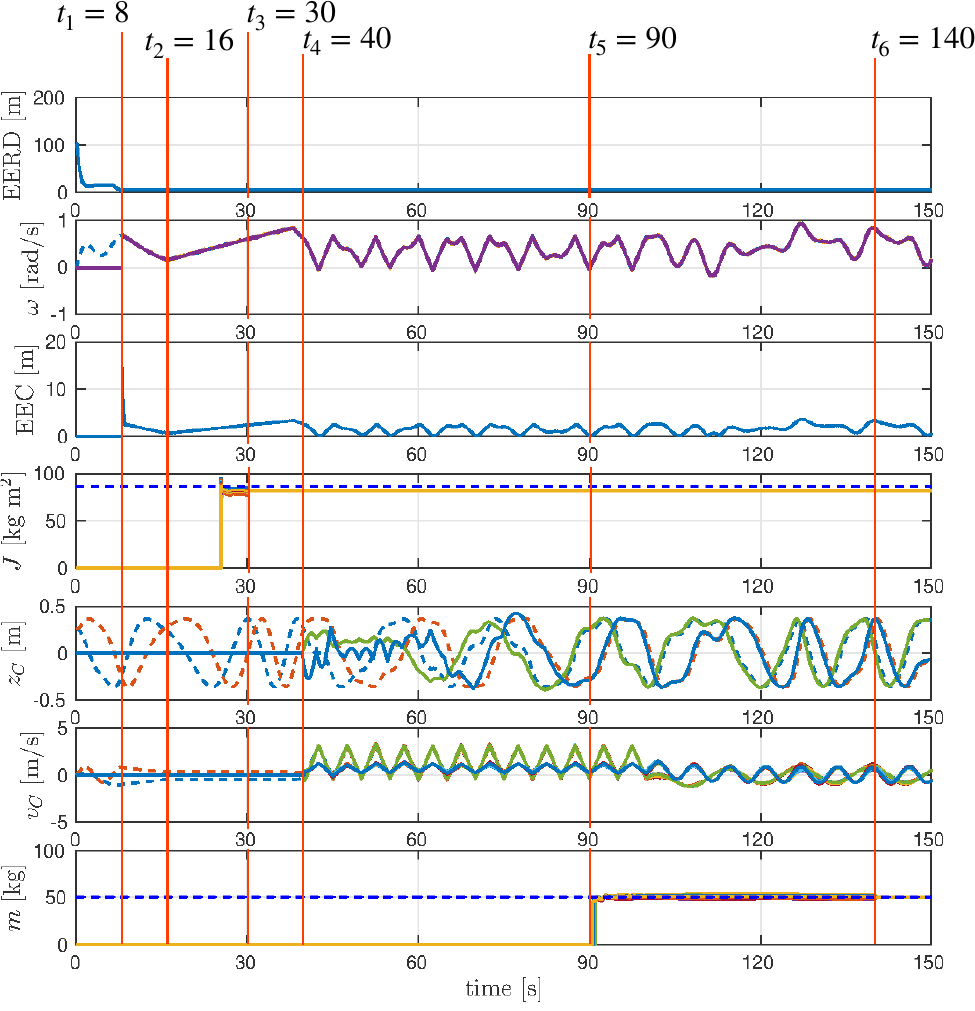}
\caption{Simulation with the same setup as in Fig.~\ref{fig:estimation}, but with a fully connected topology and noise $\sigma = 0.3$ m/s. }
\label{fig:estimation_fullyConn}
\end{figure}
\begin{figure}[t]
\centering
\includegraphics[width=0.45\textwidth]{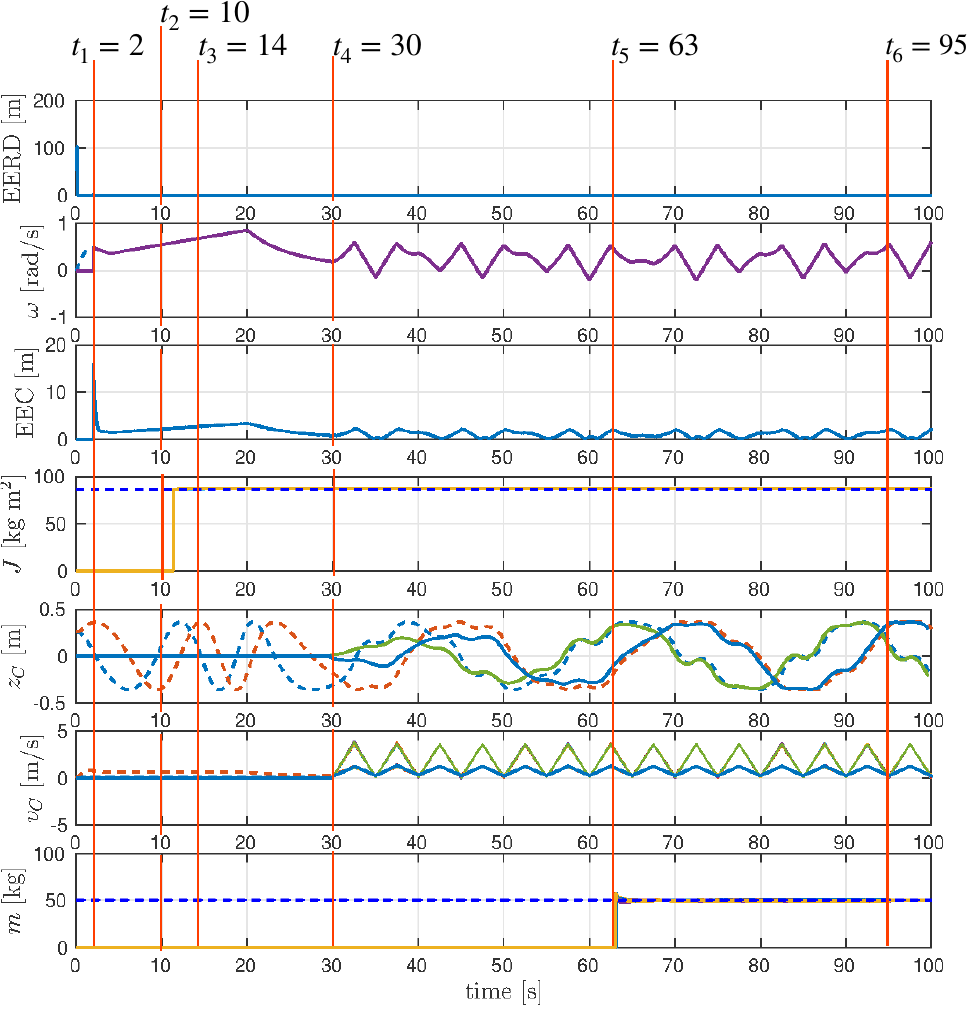}
\caption{Simulation with the same setup as in Fig.~\ref{fig:estimation}, but with a fully connected topology and noise $\sigma = 0.0001$ m/s.}
\label{fig:estimation_fullyConn_noNoise}
\end{figure}

\subsection{Performance Assessment and Uncertainty Propagation}\label{sec:perfAsses_n_uncertProp}

\begin{figure}[t]
\centering
\includegraphics[width=0.85\columnwidth]{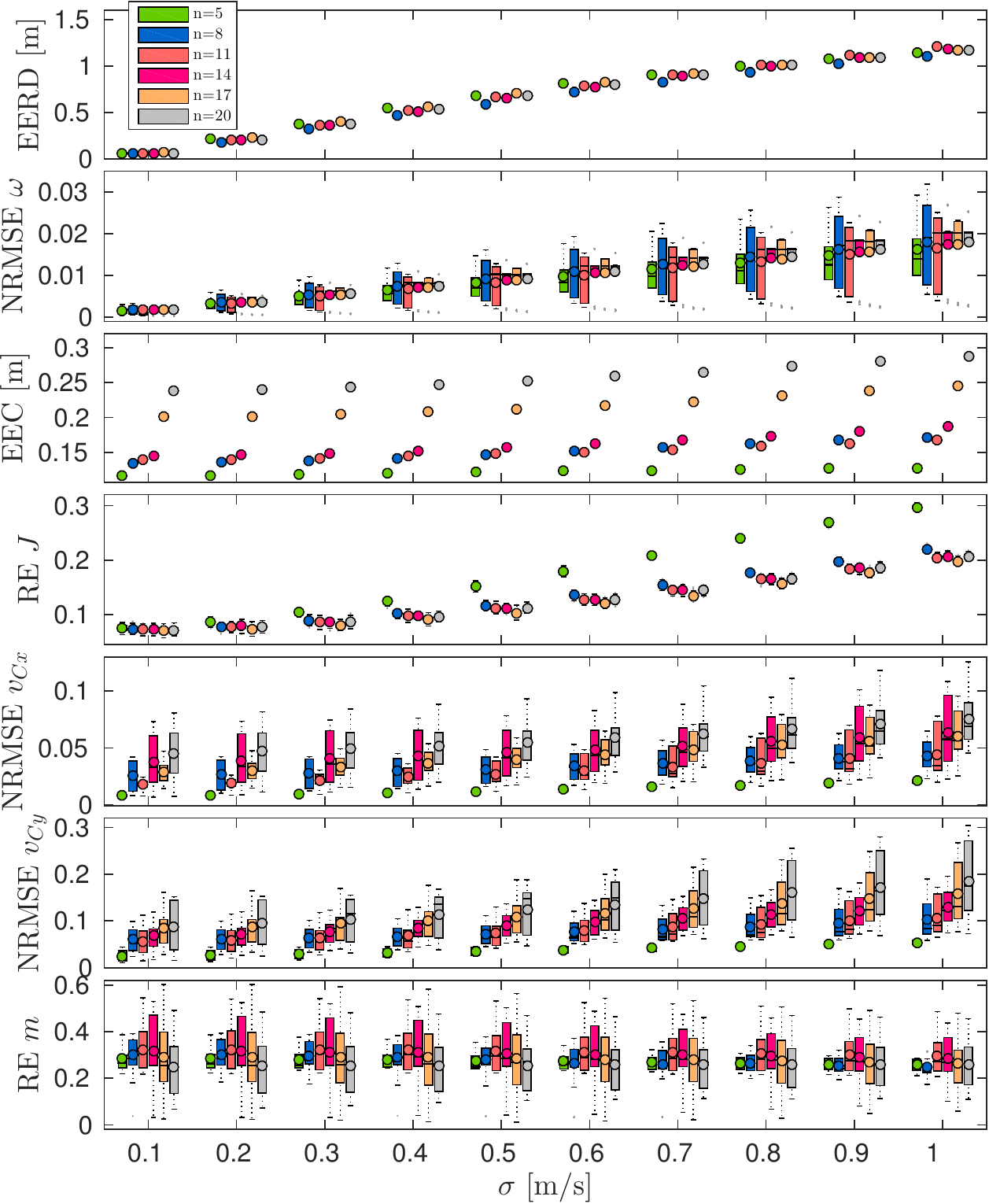}
\caption{
Box-plots obtained by running the whole algorithm in different operational conditions, selected by setting the pair $(n, \sigma)$. 
The bottom of a box indicates the first quartile, while the top indicates the third quartile. The dot inside a box is the second quartile, i.e., the median. The whisker at the bottom/top of a box indicates the lowest/highest data within 1.5 of the Interquartile Range of the lower/upper quartile.
}
\label{fig:mc_eval}
\end{figure}
\begin{figure}[th]
\centering
\includegraphics[width=0.9\columnwidth]{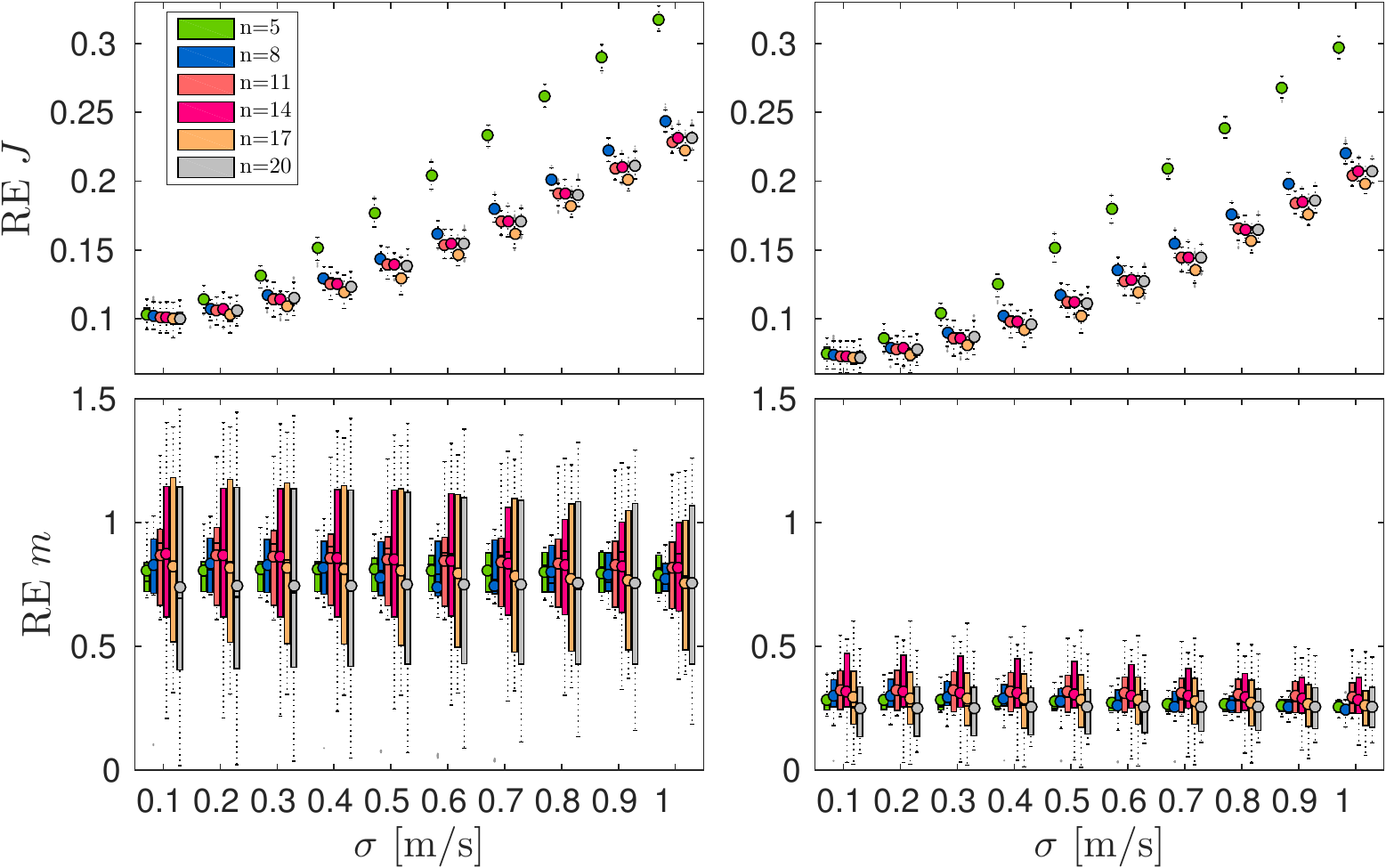}
\caption{
Box-plots of the Relative Error (RE) of the estimation of $m$ and $J$ obtained by running the whole algorithm in different conditions described by the pairs $(n, \sigma)$. The left and right column show the RE's before and after running the average consensus, respectively.
}
\label{fig:cons_eval}
\end{figure}
We analyze the performance of the algorithm by running a Monte Carlo simulation campaign.  
Specifically, a wide range of operational conditions is considered, defined by the pair $(n, \sigma)$ where $n$ indicates the number of agents in the network %
 and $\sigma$ the standard deviation defining the covariance of a zero-mean Gaussian noise, added to velocity measurements, as $\boldsymbol\Sigma_{i} = \sigma^{2} \mathbf{I}_{2\times 2}$, for each agent $i=1,\dots, n$.
 
The two parameters assume values over a 2D grid formed with the values $n \in \{5, 8, 11, 14, 17, 20 \}$ and $\sigma\in\{0.1, 0.2, 0.3, \dots, 1 \}$ m/s. Fifty independent simulations are run for each parameter pair. 
To ensure  consistency of comparison in each simulation  the mass  is set as $m=5\,n$\,kg (and the inertia $J$ is computed accordingly).
The agents communicate over the worst-case line-topology.

Figure~\ref{fig:mc_eval} illustrates the results, where each box-plot corresponds to the $50$ independent simulations executed for a given parameter pair.  
For consistency of comparison across the changing $n$, we use the 
Normalized Root Mean Square Error (NRMSE) and the Relative Error (RE), respectively for time-varying and constant parameters. 
The results show that the estimation accuracy decreases with an increasing noise level $\sigma$ for the EERD, the EEC, the estimation of $\omega$, the estimation of $\mathbf{v}_{C}$, and the estimation of $J$. On the contrary, the degradation in the estimation accuracy is of minor importance in the estimation of $m$. We observe that the error in the estimation of $J$ is strongly dependent on $n$: fixing a value for $\sigma$, the estimation accuracy increases with an increase in the number of agents. This is related to the use of the consensus algorithm for averaging out the noise, as will be better shown later in this section.
On the other hand, fixing the value for $\sigma$, the dispersion of the RE in the estimation of $m$ increases with increasing $n$. This behavior is related to 
the greater number of noisy measurements used to estimate $\mathbf{v}_{C}$, used in turn to estimate $m$. The degree of dispersion of the estimation of $\omega$ is clearly influenced by the noise level, the dispersion is close to zero for the EERD, EEC, and for the error in the estimation of $J$. Finally, fixing the number of agents $n$, the degree of the dispersion is constant with respect to the noise variation for the estimation error of $\mathbf{v}_{C}$ and $m$, confirming that the estimation of $\mathbf{v}_{C}$ and $m$ is strongly influenced by the level of noise.

As already mentioned in Sec.~\ref{sec:est_J}, for the estimation of constant parameters, such as $J$ and $m$, an average consensus algorithm is executed to average out the noise and improve the estimate. 
Figure~\ref{fig:cons_eval} shows the box-plots of the RE of the estimation of $J$ and $m$, before (on the left side) and after (on the right side)  the average consensus run. We observe that the application of the average consensus algorithm decreases significantly the average estimation error and also the dispersion.

\subsection{Inertia Moment and Mass Changing along the Task}\label{sec:numResTimeVar}
To show the ability of the second algorithm to deal with changing $m$ and $J$ we simulate a planar load with $m=50$\,kg and $J=86.89$\,kg\,m$^{2}$, manipulated by a team of $n=10$ agents communicating over a line-topology network and implementing  the observers introduced in  Sec.~\ref{sec:TimeVarInertParams}. A  zero-mean Gaussian noise with covariance matrix $\boldsymbol\Sigma_{i} = \sigma^{2} \mathbf{I}_{2\times 2}$ and $\sigma = 0.3$ m/s is added to the velocity measurements . After $100$s, we simulate a step-like decrease of mass $m$ and, consequently, a decrease of the moment of inertia $J$. 
Simulation results are illustrated in Fig.~\ref{fig:var_mass_n_inertia}. 
As theoretically proven, both the observers converge to a bounded region around the new true values. 
\begin{figure}[t]
\centering
\includegraphics[width=0.35\textwidth]{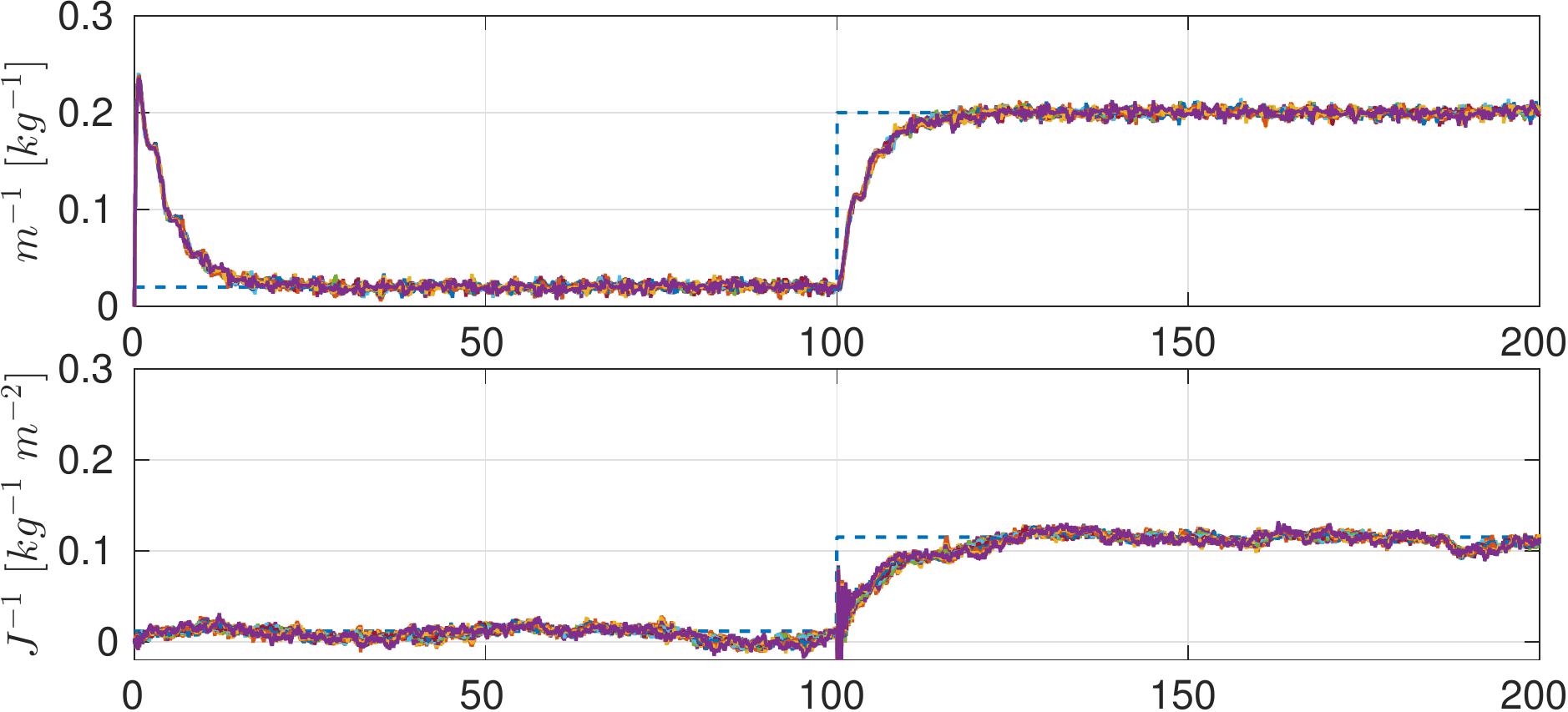}
\caption{
Simulation of the performance of the observer for the time-varying inertial parameters described in Sec.~\ref{sec:TimeVarInertParams}. To:  estimates of $m^{-1}$. Bottom: the estimates of $J^{-1}$. Dashed lines indicate true values, while solid lines indicate estimates.
}
\label{fig:var_mass_n_inertia}
\end{figure}

\bibliographystyle{IEEEtran}
\bibliography{./alias,./main}

\begin{IEEEbiography}[{\includegraphics[width=1in,height=1.25in,clip,keepaspectratio]{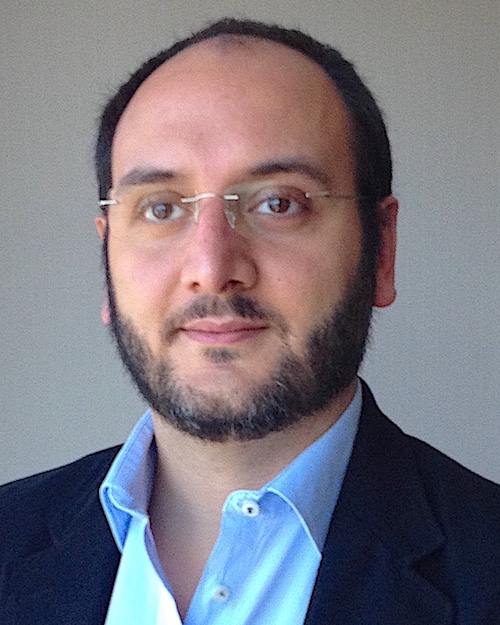}}]{Antonio Franchi} (S'07-M'11-SM'16) received the Ph.D. degree in system engineering from Sapienza University of Rome, Rome, Italy, in 2010 and the Habilitation to Direct Research (HDR) in Sciences from National Polytechnic Institute of Toulouse, Toulouse, France, in 2016.
In 2009, he was a Visiting Scholar with University of California at Santa Barbara, Santa Barbara, CA, USA. From 2010 to 2014, he was Research Scientist, Senior Research Scientist, and the Project Leader of the Autonomous Robotics and Human Machine Systems Group, Max Planck Institute for Biological Cybernetics in T\"ubingen, Germany. Since 2014, he has been a Tenured CNRS Researcher with the RIS team, LAAS-CNRS, Toulouse, France. He published more than 120 papers in peer-reviewed international journals and conferences. His main research interests include robotic systems, with a special regard to control of for aerial robots and multiple-robot systems.
Dr. Franchi received the IEEE RAS ICYA Best Paper Award in 2010. He is an Associate Editor for IEEE TRANSACTIONS ON ROBOTICS. He is the co-founder of the IEEE RAS Technical Committee on Multiple Robot Systems and of the International Symposium on Multi-Robot and Multi-Agent Systems.
\end{IEEEbiography}

\begin{IEEEbiography}[{\includegraphics[width=1in,height=1.25in,clip,keepaspectratio]{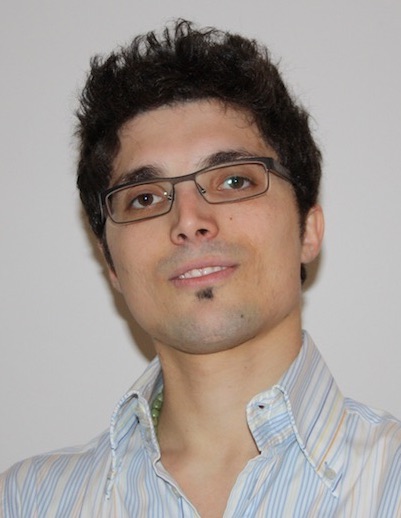}}]{Antonio Petitti} 
 received the B.S. and M.S. (Laurea Specialistica) degrees (summa cum laude) in Automation Engineering from Politecnico di Bari, Italy, in 2008 and 2010, respectively. From June 2011 to May 2018, he was Research Assistant at the Institute of Intelligent Systems for Automation (ISSIA) of the National Research Council (CNR), Italy. In 2015, he received the Ph.D. degree in Electrical and Information Engineering at Politecnico di Bari, Italy, and the joint Ph.D. degree of high qualification Scuola Interpolitecnica di Dottorato in Information and Communication Technologies. In 2013 and 2014 he was Visiting Research Fellow at ARHMS group, Max Planck Institute for Biological Cybernetics, T\"ubingen, Germany, and at RIS group LAAS-CNRS, Toulouse, France, respectively. Since June 2018, he has been Researcher at the Institute of Intelligent Industrial Technologies and Systems for Advanced Manufacturing (STIIMA) of the CNR, Bari, Italy. 
His scientific interests are focused on consensus theory and applications, distributed estimation, modeling and control of robotic networks.
\end{IEEEbiography}
\begin{IEEEbiography}[{\includegraphics[width=1in,height=1.25in,clip,keepaspectratio]{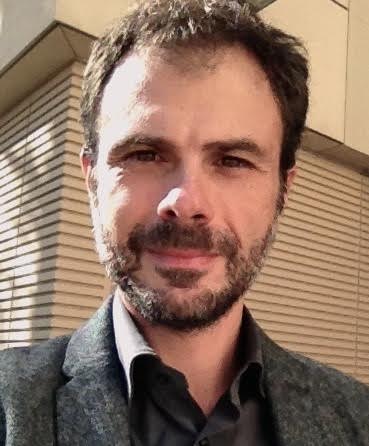}}]{Alessandro Rizzo} 
received the Laurea degree (summa cum laude) in computer engineering and the Ph.D. degree in automation and electronics engineering from the University of Catania, Italy, in 1996, and 2000, respectively. 
He is an Associate Professor at Politecnico di Torino, Italy, where he  is engaged in conducting and supervising research on cooperative robotics, complex networks and systems, modeling and control of nonlinear systems. Since 2012, he has also been a Visiting Professor at the New York University Tandon School of Engineering, Brooklyn, NY, USA. He is the author of two books, two international patents, and more than 130 papers on international journals and conference proceedings. Prof. Rizzo has been the recipient of the award for the best application paper at the IFAC world triennial conference in 2002 and of the award for the most read papers in Mathematics and Computers in Simulation (Elsevier) in 2009. Prof. Rizzo is also a Distinguished Lecture of the IEEE Nuclear and Plasma Science Society. More details can be found at the website \url{staff.polito.it/alessandro.rizzo}.
\end{IEEEbiography}

\clearpage

\begin{center}
{\LARGE Proofs of Propositions\\~\\ \normalsize \rm Technical report associated with the paper:\\ \large``Distributed Estimation of State and Parameters in Multi-Agent   Cooperative Load Manipulation''\\
\normalsize \emph{IEEE Transactions on Control of Network Systems}}

{Antonio Franchi, Antonio Petitti, Alessandro Rizzo}

\end{center}
%
% % <-this % stops a spacex1
%\thanks{A.~Franchi is with CNRS, LAAS, 7 avenue du colonel Roche, F-31400 Toulouse, France and Univ de Toulouse, LAAS, F-31400 Toulouse, France {\tt \scriptsize \href{mailto:afranchi@laas.fr}{afranchi@laas.fr}}}
%\thanks{A.~Petitti is with the Institute of Intelligent Industrial Technologies and Systems for Advanced Manufacturing, National Research Council (STIIMA-CNR), 70126 Bari, Italy, {\tt \scriptsize\href{mailto:antonio.petitti@stiima.cnr.it}{antonio.petitti@stiima.cnr.it}}}
%\thanks{A.~Rizzo is with the Dipartimento di Elettronica e Telecomunicazioni, Politecnico di Torino, Corso Duca degli Abruzzi 24, 10129 Torino, Italy {\tt \scriptsize \href{mailto:alessandro.rizzo@polito.it}{alessandro.rizzo@polito.it}}}
%          \thanks{This research was partially supported by the ANR, Project ANR-17-CE33-0007 MuRoPhen, by the Siebel Energy Institute, and by Compagnia di Sa

%%%%%%%%%%%%%%%%%%%%%%%%%%%%%%%%%%%%%%%%%%%%%%%%%%%%%%%%%%%%%%%%%%%%%%
\begin{abstract}
	This document is a technical attachment to~\cite{2018q-FraPetRiz}.
	Here, we present the proofs of the Propositions.
\end{abstract}
%%%%%%%%%%%%%%%%%%%%%%%%%%%%%%%%%%%%%%%%%%%%%%%%%%%%%%%%%%%%%%%%%%%%%%%%%%%%%%%%

%%%%%%%%%%%%%%%%%%%%%%%%%%%%%%%%%%%%%%%%%%%%%%%%%%%%%%%%%%%%%%%%%%%%%%
\section{How to Cite this Work}

This technical report is accompanying the IEEE Transactions on Control of Network Systems paper~\cite{2018q-FraPetRiz}. If you wish to reference this work, please cite this paper as follows:

{\small
\begin{lstlisting}
@Article{Franchi19tcns,
  author  = {A. Franchi and A. Petitti and
             A. Rizzo},
  title   = {Distributed Estimation of State 
             and Parameters in Multi-Agent
             Cooperative Load Manipulation},
  journal = {{IEEE} Transactions on Control
             of Network Systems},
  year	  = {TBD},
  doi	  = {TBD},
}
\end{lstlisting}
}

\begin{proof}[Proof {of Proposition~\ref{prop:iss} in~\cite{2018q-FraPetRiz}.} \nopunct]
In the ideal case with $\varepsilon_{1,2,3}=0$, the origin of the system in~\eqref{eq:error_observer_eq} is asymptotically stable (Theorem V.1,~\cite{2015b-FraPetRiz}). 
%The system in Eq.~\eqref{eq:error_observer_eq} can be seen as
%\begin{equation}
%\label{eq:error_observer_eq_small}
%\dot{\mathbf{e}} = \mathbf{f}(\mathbf{e}) + \varepsilon_3 \left[0\,\, 0\,\, 1\right]^\top,
%\end{equation}
%considering $\varepsilon_3$ as an input of the system.
Consider the following Lyapunov function $V(\mathbf{e}) = \frac{1}{2} \mathbf{e}^\top \mathbf{e}$, then
\begin{equation}
\begin{aligned}
\dot{V} =& -e_1 e_2 x_3 + e_1 \widetilde{u}_2 e_3 + e_2 e_1 x_3 - e_2 \widetilde{u}_1 e_3 + \dots\\
& e_3 e_1 \widetilde{u}_2 - e_3 e_2 \widetilde{u}_1 - k_e e^2_3 - e_3 \varepsilon_3\\
& = - k_e e^2_3 - e_3 \varepsilon_3 \leq - k_e e^2_3 + |e_3| |\varepsilon_3|.
\end{aligned}
\end{equation}
%\startblue
We observe that
\begin{equation}
\begin{aligned}
- k_e e^2_3 +& |e_3| |\varepsilon_3| =\\
= & - k_e e^2_3 + |e_3| |\varepsilon_3| + k_e e^2_1 - k_e e^2_1 + k_e e^2_2 - k_e e^2_2\\
= & - k_e \|\mathbf{e}\|^2 + |e_3| |\varepsilon_3| + k_e e^2_1 + k_e e^2_2.
\end{aligned}
\end{equation}
%\leq - k_e e^2_3 + |e_3| |\varepsilon_3| \leq \dots\\
%& - k_e e^2_3 + \|\mathbf{e}\| \|\boldsymbol{\varepsilon}\|.
%\prevcolor
Considering that $\|\mathbf{e}\| \geq |e_3|$ and $\|\boldsymbol{\varepsilon}\| \geq |\varepsilon_3|$, we have that
%\startblue
\begin{equation}
\begin{aligned}
\dot{V} \leq& - k_e \|\mathbf{e}\|^2 + |e_3| |\varepsilon_3| + k_e e^2_1 + k_e e^2_2\\
 \leq& - k_e \|\mathbf{e}\|^2 + \|\mathbf{e}\| \|\boldsymbol{\varepsilon}\| + k_e e^2_1 + k_e e^2_2.
\end{aligned}
\end{equation}
Moreover, considering that $k_e \|\mathbf{e}\|^2 \geq k_e e^2_1 + k_e e^2_2 $, we can write
\begin{equation}
\begin{aligned}
\dot{V} \leq& - k_e \|\mathbf{e}\|^2 + \|\mathbf{e}\| \|\boldsymbol{\varepsilon}\| + k_e e^2_1 + k_e e^2_2\\
\leq& - k_e \|\mathbf{e}\|^2 + \|\mathbf{e}\| \|\boldsymbol{\varepsilon}\| + k_e \|\mathbf{e}\|^2.
\end{aligned}
\end{equation}
Thus, we rewrite the foregoing inequality as
\begin{equation}
\dot{V} \leq - k_e (1-\theta) \|\mathbf{e}\|^2 - k_e \theta \|\mathbf{e}\|^2 + \|\mathbf{e}\| \|\boldsymbol{\varepsilon}\| + k_e \|\mathbf{e}\|^2,
\end{equation}
where $0 < \theta < 1$. The inequality $- k_e \theta \|\mathbf{e}\|^2 + \|\mathbf{e}\| \|\boldsymbol{\varepsilon}\| + k_e \|\mathbf{e}\|^2\leq 0$ holds if $\|\mathbf{e}\| \geq \frac{\|\boldsymbol{\varepsilon}\|}{k_e (\theta - 1)}$.
Thus, 
\begin{equation}
\label{eq:iss_condition}
\dot{V} \leq - k_e (1-\theta)\|\mathbf{e}\|^2,\,\, \forall \|\mathbf{e}\| \geq \frac{\|\boldsymbol{\varepsilon}\|}{k_e (\theta - 1)}.
\end{equation}
%\prevcolor
Hence, the system is ISS {(Theorem 4.19,~\cite{2002-Kha})}.
\end{proof}

\begin{proof}[Proof {of Proposition~\ref{prop:observer_J} in~\cite{2018q-FraPetRiz}.} \nopunct]
Define the error vector as $\mathbf{e} = \left[ e_1 \ e_2 \ e_3 \ e_4 \right]^\top = \left[(x_1x_4-\hat x_1)\ (x_2x_4-\hat x_2)\ (x_3-\hat x_3) \ (x_4-\hat x_4)\right]^\top$. After some algebra, the error dynamics is given by
\begin{equation}
\label{eq:lin_sys_error_cm_estim}
\dot{\mathbf{e}} = 
\left[\begin{matrix}
0 & -x_3 & - u_2 & 0\\
x_3 & 0 & u_1 & 0 \\
u_2 & - u_1 & -k_e & -u_3\\
0 & 0 & u_3 & 0\\
\end{matrix}
\right]
 \mathbf{e} = \left[\mathbf{U} + {\rm diag}(0,0,-k_e,0)\right]\mathbf{e},
\end{equation}
where $\mathbf{U}$ is skew symmetric, i.e., $\mathbf{U}+\mathbf{U}^\top=\mathbf{0}$.
%where $$A = \left[
%\begin{array}{ccc}
%
%\end{array}
%\right],
%$$
Define the following candidate Lyapunov function:
$
V(\mathbf{e}) = \frac{1}{2}\mathbf{e}^\top\mathbf{e},
$
whose time derivative along the system trajectories is
\begin{equation}
\dot V = \mathbf{e}^\top\dot{\mathbf{e}} = \mathbf{e}^\top\mathbf{U}\mathbf{e} -k_e e_3^2 = -k_e e_3^2,
\label{eq:lyap_fun_cm_estim}
\end{equation}
%\begin{equation}
%\begin{aligned}
%\dot V(\mathbf{e}) =& e_1(-e_2 x_3 - u_2 e_3)+ e_2(x_3 e_1 + u_1 e_3)+\\&e_3(u_2 e_1 - u_1 e_2 + u_3 e_4 - k_e e_3) + e_4 u_3 e_3\\
%=&- e_1 e_2 x_3- e_1 u_2 e_3 + e_2 x_3 e_1 + \\ &\quad e_2 u_1 e_3 + e_3 u_2 e_1 - e_3 u_1 e_2 - k_e e_3^2\\
%=& - k_e e_3^2, 
%\end{aligned}
%\end{equation} 
which is negative semidefinite.
Now in order to study the invariant set that ensures that $\dot V = 0$ we impose that $e_3\equiv 0$, which implies, in particular, that $e_3= 0$, $\dot e_3= 0$, and $\ddot e_3= 0$. Considering, for simplicity, the case in which inputs are stepwise constant, the last three equations \eqref{eq:cm_observer_eq_tvar}--\eqref{eq:lyap_fun_cm_estim} result in the following system of linear equations:
\begin{align}
\left[
\begin{matrix} 
u_2 & - u_1 & u_3\\
-x_3 u_1 & -x_3 u_2 & 0\\
-x_3^2 u_2 & x_3^2 u_1 & 0\\
\end{matrix}
\right]
\left[
\begin{matrix}
e_1\\
e_2\\
e_4
\end{matrix}
\right]
=
\mathbf{E}
\left[
\begin{matrix}
e_1\\
e_2\\
e_4
\end{matrix}
\right]
=
\left[ 
\begin{matrix}
0\\
0\\
0
\end{matrix}
\right].
\end{align}
The determinant of $\mathbf{E}$ is $-u_3x_3^3(u_1^2+u_2^3)$. If the assumptions of the theorem hold, then $\mathbf{E}$ is nonsingular and therefore the only trajectory of the system that ensures $\dot V=0$ is $\mathbf{e}=\mathbf{0}$.
\end{proof}

\begin{proof}[Proof {of Proposition~\ref{prop:observer_m} in~\cite{2018q-FraPetRiz}.} \nopunct]
Define the error vector $\mathbf{e} = (e_1\,\,\, e_2)^\top = (z_1- \hat{z}_1\,\,\, z_2- \hat{z}_2)^\top$. The error dynamics is given by
\begin{equation}
\label{eq:lin_sys_error_m_estim}
\dot{\mathbf{e}} = 
\left[\begin{matrix}
-k_1 & u \\
-k_2 & 0 \\
\end{matrix}
\right]
 \mathbf{e} = \mathbf{A}\mathbf{e}.
\end{equation}
%Note that $\mathbf{A}$ is time-varying. 
%Furthermore, given the
Consider the Lyapunov candidate $V = \frac{1}{2}\mathbf{e}^\top\mathbf{e}$, its time derivative is $\dot{V} = \mathbf{e}^\top\mathbf{A}\mathbf{e}$. If the eigenvalues of $\mathbf{A}$ are real and non-greater than $-\epsilon < 0$, then asymptotic stability is guaranteed. 
After some algebra, the eigenvalues of $\mathbf{A}$ are 
%Given the characteristic polynomial of $\mathbf{A}$, after some algebra, it turns out that the eigenvalues of $\mathbf{A}$ are 
$$\lambda_{1,2} = \frac{1}{2}\left( -k_1 \pm \sqrt{k_1^2 - 4 k_2 u} \right).$$ Therefore, $\lambda_{1,2}$ are real iff $k_1^2 > 4k_2u$. Moreover, if $k_1 > \epsilon_1 >0$ and $k_2u > \epsilon_2 >0$, then $\exists \epsilon>0$ s.t. $\lambda_{1,2}<-\epsilon < 0$.
\end{proof}

\begin{proof}[Proof {of Proposition~\ref{prop:bounded_nonvanishing_omega} in~\cite{2018q-FraPetRiz}.} \nopunct]
In order to prove~\eqref{eq:omega_bound}, let us consider the quantity $\alpha = \omega^2 - 2 n J^{-1}  \textbf{z}_C^\top \textbf{f}^*$. Let us now take the derivative of $\alpha$ w.r.t. time. Using~\eqref{eq:omega_dyn_z_C} and~\eqref{eq:zC_dyn}, we obtain $\dot \alpha = 0$,
%\[
%\dot\alpha = 2 n J^{-1}  \omega   \textbf{z}_C^\top \textbf{f}^{*T} -
% 2 n J^{-1}  \omega {\textbf{z}_C^\perp}^\top \textbf{f}^* = 0,
%\]
i.e., $\alpha$ is an invariant along the system trajectories when $\mathbf{f}_i = \mathbf{f}^*= {\rm const}$, $\forall i=1 \ldots n$. In particular, $\alpha(t) = \alpha(0)$, which implies
\begin{align}
\omega^2(t)  &= \omega^2(0) - 2 n J^{-1}  \textbf{z}_C^\top(0) \textbf{f}^* + 2 n J^{-1}  \textbf{z}_C^\top(t) \textbf{f}^* \label{eq:equal_omega}\\
&= \omega^2(0) +  2 n J^{-1} ( \textbf{z}_C(t) -  \textbf{z}_C(0))^\top  \textbf{f}^* \notag\\
&\le \omega^2(0) +  2 n J^{-1} \| \textbf{z}_C(t) -  \textbf{z}_C(0)\|  \|\textbf{f}^*\| \notag\\
&\le \omega^2(0) +  4 n J^{-1} \| \textbf{z}_C\| \| \textbf{f}^* \|, 
\label{eq:ineq_omega}
\end{align}
which in turn proves~\eqref{eq:omega_bound}. Note that we used the fact that $\| \textbf{z}_C\|$ is constant over time to derive~\eqref{eq:ineq_omega}. 

In order to prove~\eqref{eq:critical_init_cond}, we impose that $\omega({\bar t})$ is identically zero, along with all its derivatives. 
%$0=\omega({\bar t})=\dot \omega({\bar t}) = \ddot \omega({\bar t}) = \ldots$ for all the derivatives.
Imposing in~\eqref{eq:equal_omega} that $\omega({\bar t})=0$, we obtain %for $t={\bar t}$ we have
\begin{align}
0  = \omega^2(0) - 2 n J^{-1}  \textbf{z}_C^\top(0) \textbf{f}^* + 2 n J^{-1}  \textbf{z}_C^\top({\bar t}) \textbf{f}^*.
\label{eq:invariant_at_T}
\end{align} 
Setting $\dot \omega({\bar t}) = 0$,  $\mathbf{f}_i=\mathbf{f}^*$, $\tau_i=0$, $\forall i=1 \ldots n$ in~\eqref{eq:omegadot}, we obtain
\begin{align}
{\textbf{z}_C^\perp({\bar t})}^\top\mathbf{f}^* = 0,
\quad
\Rightarrow
\quad
%\end{align}
%which implies 
%\begin{align}
{\textbf{z}_C^\top({\bar t})}\mathbf{f}^* = \|\textbf{z}_C\|\|\mathbf{f}^*\|.
\label{eq:z_C_f_start_prodoct}
\end{align}
Plugging~\eqref{eq:z_C_f_start_prodoct} in~\eqref{eq:invariant_at_T} gives
\begin{align}
0  = \omega^2(0) - 2 n J^{-1}  \textbf{z}_C^\top(0) \textbf{f}^* + 2 n J^{-1}  \|\textbf{z}_C\|\|\mathbf{f}^*\|, \label{eq:invariant_at_T_II}
\end{align} 
which, reordered, gives~\eqref{eq:critical_init_cond}. The proof is concluded by noticing that $\omega({\bar t})=\dot \omega({\bar t})=0$ implies (see~\eqref{eq:omega_dyn_z_C} and~\eqref{eq:zC_dyn}) that all the higher order derivatives of $\omega$ at ${\bar t}$ are zero as well.
\end{proof}

\begin{proof}[Proof {of Proposition~\ref{prop:stopping_motion} in \cite{2018q-FraPetRiz}.} \nopunct]
From~\eqref{eq:velocity_com} and using the identities on the left-hand side in~\eqref{eq:zero_ident}, it is straighforward to derive the following two identities:
\begin{align}
\sum_{i=1}^n\mathbf{v}_{C_i} = n\mathbf{v}_{C} + n\omega\mathbf{z}_{C}^\perp\;, 
%\text{and}
\quad 
\mathbf{v}_{C_i} = \mathbf{v}_{C} + \omega(\mathbf{z}_{C} + \mathbf{z}_i)^\perp,
\notag
\end{align}
which can be used to obtain, respectively,
\begin{align}
&\quad \quad \quad n\mathbf{f}_{\rm mean} = -b \sum_{i=1}^n \mathbf{v}_{C_i} = -bn\mathbf{v}_{C} -nb\omega\mathbf{z}_{C}^\perp \quad
\label{eq:fstar_damped}\\
\end{align}
and
\begin{align}
\eta &= -\frac{b}{J}\sum_{i=1}^n\mathbf{z}_i^{\perp^\top}\mathbf{v}_{C_i} = 
%\notag\\
%&=
-\frac{b}{J}
\Bigg(
(\mathbf{v}_{C} + \omega\mathbf{z}_{C}^\perp )^\top
\underbrace{\sum_{i=1}^n\mathbf{z}_i^\perp}_{=0} 
+
\omega\sum_{i=1}^n\mathbf{z}_i^{\perp^\top} \mathbf{z}_i^\perp
\Bigg)\notag\\
&=
-\frac{b}{J}\omega\sum_{i=1}^n\|\mathbf{z}_i\|^2.\label{eq:eta_damped}
\end{align}
%
%\red{AP: sbaglio o la $\mathbf{v}_{C_i}$ in rosso qui sopra in realt\`a dovrebbe essere $\mathbf{v}_{C}$?}\\
Plugging~\eqref{eq:fstar_damped} and~\eqref{eq:eta_damped} in~\eqref{eq:omega_dyn_z_C} and~\eqref{eq:lls_mass} we obtain
\begin{align}
J \dot \omega &= 
-bn
\left(
{\mathbf{z}_C^\perp}^\top \mathbf{v}_{C} 
+
\omega{\mathbf{z}_C^\perp}^\top \mathbf{z}_{C}^\perp
\right)
 -b\omega\sum_{i=1}^n\|\mathbf{z}_i\|^2
\notag\\
m \dot{\mathbf{v}}_C &= -bn
\left(
\mathbf{v}_{C} + \omega\mathbf{z}_{C}^\perp
\right).
\notag
\end{align}
Let us consider  
$V=\tfrac{J\omega^2+m\|\mathbf{v}_C\|^2}{2}$ as a Lyapunov candidate function.
We obtain
\begin{align}
\dot V &= 
-bn
\left(
\mathbf{v}_{C}^\top\mathbf{v}_{C}
+
{2\omega\mathbf{z}_C^\perp}^\top \mathbf{v}_{C} 
+
\omega^2{\mathbf{z}_C^\perp}^\top \mathbf{z}_{C}^\perp
\right)
 -b\omega^2\sum_{i=1}^n\|\mathbf{z}_i\|^2
= \notag\\
& =  -b  
\left(
n\|
\mathbf{v}_{C}
+
\omega\mathbf{z}_C^\perp
\|^2
+
\omega^2\sum_{i=1}^n\|\mathbf{z}_i\|^2 
\right) < 0 \quad \forall [\mathbf{v}_{C}^\top \,\omega]\neq \mathbf{0}^\top,
\notag
\end{align}
which proves the thesis of the proposition.
\end{proof}

\end{document}